%% file: main.tex
\documentclass[11pt]{article} 
\usepackage[left=1in,top=1in,right=1in,bottom=1in,letterpaper]{geometry}

\usepackage{appendix}
\usepackage{color}
\usepackage{url}
\usepackage{graphicx}
\usepackage{multirow}
\usepackage{amssymb}
\usepackage{amsthm}
\usepackage{amsmath}
\usepackage[inline]{enumitem}
\usepackage{hyperref}
\usepackage{microtype}      
\usepackage{graphicx}

\usepackage{tcolorbox}
\usepackage{listings}
\usepackage{xcolor}
\usepackage{wrapfig}
\usepackage{cleveref}
\usepackage{makecell}
\usepackage{subcaption} 
\usepackage{multirow}
\usepackage{multicol}
\usepackage{blindtext}
\usepackage{enumitem}
\usepackage{calc}
\usepackage{ifthen}
\usepackage{bbding}
\usepackage{amsmath,amsfonts, amsthm, amssymb}
\usepackage{color,graphicx,overpic}
\usepackage{hyperref}
\usepackage{bbm}
\usepackage[round, sort&compress]{natbib}
\usepackage{subcaption}
\input{Command}

\title{When does Chain-of-Thought Help: A Markovian Perspective}

\author{
Zihan Wang
\ \ \ \ \ \  
Yijun Dong
\ \ \ \ \ \  
Qi Lei
\vspace{0.3cm}
\\
New York University
}
\date{}

\begin{document}

\maketitle
\vspace{-0.5cm}
\begin{abstract}
  Chain-of-Thought (CoT) prompting is a widely used inference-time technique for improving reasoning, yet its gains are uneven across tasks. We analyze when and why CoT helps by modeling the step-wise reasoning trajectory as a Markov chain. Each intermediate step is a state and the dependence between steps is captured by a transition kernel. Our theory identifies transition alignment, whether instances share a common step-wise transition kernel, as the key determinant of CoT’s effectiveness. When transitions are identical across steps, CoT reduces inference-time sample complexity: fewer context sample trajectories suffice to recover the final decision. In contrast, when transitions differ across steps, these gains can vanish. We further quantify how noise in intermediate steps modulates CoT’s benefit. Beyond theory, we design synthetic benchmarks that isolate these factors to complement prior results on real-world tasks and to empirically validate our predictions.
\end{abstract}

\section{Introduction}

Chain-of-Thought (CoT) prompting has become a de facto approach of inference-time scaling for multi-step reasoning with large language models (LLMs). 
It demonstrates substantial improvements on math and symbolic tasks \citep{wei2022chain,kojima2022zeroshot,wang2023self,zhou2022least,yao2023tree} but modest or mixed effects on others \citep{sprague2025cot}. Noisy or unfaithful intermediate reasoning steps have even been shown to mislead predictions of CoT and bring worse performance than direct inference, despite the additional computation of CoT \citep{prabhakar2024deciphering}. 

Since its first introduction, CoT has been extensively investigated empirically and theoretically.
Mechanism-centric work explores how to aggregate inference-time trajectories (e.g., self-consistency, tree search) and offers qualitative rationales \citep{wang2023self,yao2023tree}. 
Task-centric works analyzed catalogs where CoT tends to help (e.g., math, symbolic, multi-hop) and where it does not \citep{sprague2025cot,cobbe2021gsm8k,lewkowycz2022minerva,yang2018hotpotqa}.
Along this task-centric perspective, a critical missing piece is a rigorous yet intuitive theoretical model of CoT that explains its successes and failures on different downstream tasks.
This motivates our central research questions:
\begin{center}
    \emph{When does CoT provably outperform direct inference? \\ Can we distinguish beneficial cases of CoT from its failures via measurable structural properties of the downstream task?}
\end{center}
Mechanistic understandings of these questions will (i) provide first-principled guidelines for where to apply CoT and how to structure in-context demonstrations, (ii) clarify which aspects of the downstream tasks make CoT appear strong or fragile, and (iii) steer the design of clean evaluation metrics for CoT that disentangle different sources of failures.
This work makes a step toward filling these gaps through theory-backed accounts and controlled experiments.

In light of the stochastic and autoregressive nature of CoT, along with the finite context windows in practice, we adapt the Markovian view of reasoning from \cite{xie2021iclbayes,prystawski2023think,besta2024graph,abbe2024far,sanford2024understanding,kim2025metastable} and compare CoT against direct inference through a unified lens.
Conceptually, we formalize step-wise reasoning as a trajectory over latent states and study how two ingredients govern sample efficiency: \textbf{transition alignment} across steps (``same skill'' vs.\ ``different skills'') and \textbf{noise} separating correct from competing options. Intuitively, identical local rules can enhance per-step votes through the reasoning process, leading to a performance gain for CoT. And because composed, end-to-end margins contract under uncertainty, the scale of intermediate-step noise also plays an important role in reasoning. We keep the formal modeling and decision rule in Section~\ref{sec:markov}.

Specifically, we provide a compact analysis of inference-time CoT that isolates the two factors above and turns them into concrete predictions. Our theory shows when CoT enjoys a structural $1/T$-type improvement and when it does not, and also explains how noise affects in Section~\ref{sec:theory}. The proof sketch of main results is in Section~\ref{sec:proof sketch}. In Section~\ref{sec:exp}, we then validate these predictions in clean synthetic settings that manipulate alignment and noise directly, and in a small arithmetic task that offers a practical sanity check.

Our contributions are as follows, which offer a compact answer to our essential question together.
\begin{enumerate} 
    \item \textbf{A Markovian modeling and decision rule for inference-time aggregation.} We model reasoning as a Markov chain over a finite state space and analyze inference time only. This isolates how context samples translate into decisions, aligning with trajectory aggregation practices such as self-consistency and search \citep{wang2023self,yao2023tree}. \item \textbf{Theory that pinpoints the factors behind CoT's benefits.} We derive sample-complexity bounds for (a) direct inference, and (b) CoT with aligned and misaligned transitions. From the theoretical results, we demonstrate two key factors in CoT performance.
    \item 
    \textbf{Clean synthetic tests that manipulate alignment and noise.} We design clean benchmarks showing that CoT reduces the sample budget most when transitions align, and its relative advantage increases with noise. We also verify our results on a more realistic but still structured modular addition task.
\end{enumerate} 


\section{Related Works}
\paragraph{When does CoT help, (and when it doesn't)?}
Large‑scale empirical assessments unveil that the largest gains of CoT tend to concentrate on math/symbolic tasks, with much smaller effects elsewhere; failures often arise from noisy or unfaithful intermediate steps, suggesting the need for aggregation or verification to counteract error propagation~\citep{prabhakar2024deciphering,sprague2025cot,zheng2025curse}. 
These observations underscore that task structure and intermediate noise govern the returns to inference‑time compute, and thereby motivate our theoretical investigation of skill diversity and intermediate-step noise. 

\paragraph{Inference-time scaling via reasoning trajectory aggregation.}
Inference in autoregressive models consists of forward passes through the trained network to generate tokens, a process once regarded as lightweight and essentially fixed in cost. With the unprecedented capacity of modern LLMs, however, reasoning has emerged as an effective attribution of computation, scaling which significantly improves inference-time performance~\citep{jaech2024openai,guo2025deepseek}.
CoT prompting~\citep{wei2022chain} and its zero-shot variant~\citep{kojima2022zeroshot} established the basic paradigms of eliciting intermediate reasoning steps, which can be scaled via assorted schemes.
\begin{enumerate*}[label=(\roman*)]
    \item Subsequent methods that explore and aggregate multiple reasoning trajectories~\citep{zhou2022least,chen2022pot,wang2023self,yao2023tree,snell2024scaling}, like self-consistency and best-of-n, have been shown to amplify these gains even further. 
    \item In addition, CoT has been enhanced through various reinforcement-learning-based methods, such as Monte-Carlo Tree Search~\citep{silver2018general,trinh2024solving,feng2023alphazero,xie2024monte} and process reward models~\citep{lightman2023let,uesato2022solving}.
    \item Alternatively, self-improvement that refines reasoning trajectories using the model's own judgments~\citep{zelikman2022star,hosseini2024v,kumar2024training} has also been used to boost performance at inference.
\end{enumerate*}
Collectively, these empirical successes of reasoning as inference-time scaling motivate our focus on the structure of multi‑step reasoning trajectories and inference‑time sample complexity.

\begin{figure*}[t]
  \centering

  \begin{subfigure}[t]{0.7\textwidth}
    \centering
    \includegraphics[width=\linewidth]{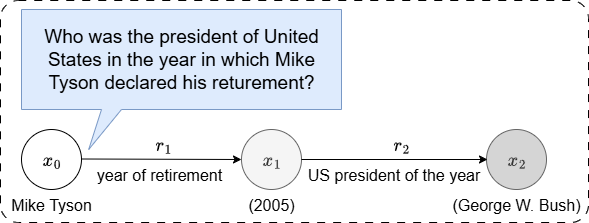}
    \caption{}
    \label{fig:modeling_example}
  \end{subfigure}
  \hfill
  \begin{subfigure}[t]{0.27\textwidth}
    \centering
    \includegraphics[width=\linewidth]{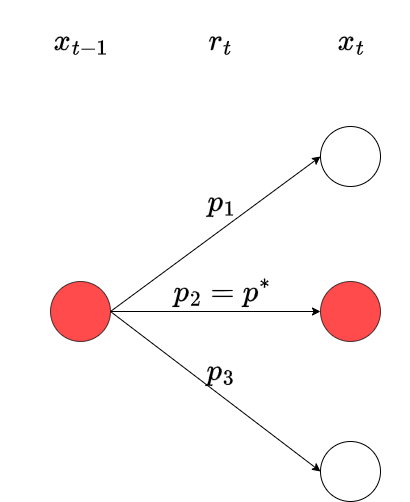}
    \caption{}
    \label{fig:markov_choice}
  \end{subfigure}

  \caption{(a) \textbf{Example of Markov-style decomposition.}
      An illustrative instance, where the original text is selected from \citep{mavi2024multi}, is parsed into an initial state $x_0$ and a sequence of relations/operators $r_{1:T}$. The correct intermediate steps $x_{1:T}$ are also included.
      The example highlights how text data are collapsed into a formalized sequence, making the intermediate operations explicit for CoT demonstrations. (b) \textbf{Stepwise Markov dynamics.}
      Each $r_t$ induces a transition kernel $P^{(t)}$ from $x_{t-1}$ to $x_t$ and the next node is generated randomly following the transition.
      At inference, the ground truth intermediate steps are the nodes with the largest probability $p^*$ for each step.}
  \label{fig:markov_overview}
\end{figure*}

\paragraph{Graph and Markovian views on LLMs and CoT.}
Reasoning on graphs, either externally through knowledge graphs or implicit via graph algorithms, provides an abstraction for multi-step tasks~\citep{xie2021iclbayes,prystawski2023think,besta2024graph,abbe2024far,sanford2024understanding,kim2025metastable}.
Several works interpret transformers or their decoding dynamics through a Markov lens, including token-level connections between autoregressive LLMs and finite-state Markov chains~\citep{zekri2024large}, the relation between self-attention and Markov models~\citep{ildiz2024self}, and mechanistic studies of in-context learning through induction heads~\citep{edelman2024evolution,makkuva2024attention}. 
In parallel, several recent approaches operationalize an explicit Markovian state abstraction for long reasoning: compress long CoT histories into short states for mathematical reasoning \citep{yang-etal-2025-markov}, enforce constant-size states via
chunked reasoning with carryover \citep{aghajohari2025markovianthinker}, and train a text-based bottleneck that
makes the intermediate CoT more causally load-bearing \citep{viteri2024markoviantransformers}.
Closer to our setting, \cite{kim2025metastable} casted CoT as a metastable Markov process on thought space, explaining why long reasoning trajectories exhibit phase transitions and suggesting search/RL/distillation schemes to escape local basins.
We differ in what is modeled and which question is answered: by positing a task-level Markov chain over latent reasoning states, we analyze the inference-time sample complexity of CoT under different structures of the Markov chain corresponding to different skill diversity and intermediate-step noise of the downstream task.

\paragraph{More theoretical accounts of CoT.}
Beyond the graph and Markovian views, the benefits and mechanisms of CoT have been extensively investigated from various theoretical perspectives, including expressivity~\citep{feng2023towards,chiang2023tighter,merrill2024expressive,li2024chain}, 
lower bounds in step complexity of CoT \citep{amiri2025lowerbounds, barcelo2025eh_rank}, simulation of optimization algorithms~\cite{huang2025transformers}, inference-time statistical efficiency~\citep{altabaa2025cot,joshi2025theory,wen2024sparse,huang2025sample,kim2025transformers,hu2024unveiling}, and the emergence of CoT/reasoning capability from pretraining~\citep{nichani2024transformers,li2024nonlinear,yang2025multi,wang2025learning}.
Our Markovian perspective of CoT is orthogonal and complementary to these, explaining not only the empirical successes but also failures of CoT in terms of sample complexity at the inference time.

\section{Markovian Modeling}
\label{sec:markov}

We model an instance as a sequence of $T$ relations (local rules/operators) applied to an initial state. The ground truth of an instance contains an input $x_0$, a final output $x_T$, relations $(r_1,\dots,r_T)$ and latent intermediate steps $(x_1,\dots,x_{T-1})$. The model only observes the input and relations $(x_0, r_1, \dots, r_T)$ and must infer $x_T$. With CoT, the model may also emit predictions $\hat{x}_1, \dots, \hat{x}_{T-1}$ to estimate intermediate steps sequentially conditioned on the growing prefix; without CoT it directly predicts $\hat{x}_T$. An illustrative example is shown in Fig.~\ref{fig:modeling_example}.

This abstraction treats the textual problem as a latent state transformation pipeline: each $r_t$ induces a transition kernel $P^{(t)}$ that maps the current state $x_{t-1}$ to a distribution over next states $x_t$. Many reasoning tasks naturally decompose this way, for example carrying updates in arithmetic \citep{cobbe2021gsm8k,lewkowycz2022minerva,chen2022pot}), symbolic rewrites \citep{clark2020rulertaker,tafjord2021proofwriter}), knowledge-graph paths \citep{yang2017neurallp,hamilton2018embedding,ren2020betae}, and composing path in multi-hop QA \citep{yang2018hotpotqa,khot2020qasc,trivedi2022musique}).
By collapsing text into a finite set of task-relevant states, we separate \emph{form} (how a chain is written) from \emph{mechanism} (what operation is performed), which is exactly what CoT aims to expose: a trajectory of intermediate operations rather than a single terminal answer.

We work with a finite state space $[k]$ as a compact abstraction of task-relevant equivalence classes (e.g., arithmetic states, intermediate answers in multi-hop QA), not a literal vocabulary. In many benchmarks, a small latent alphabet suffices to determine the final decision. The end-to-end kernel is
\begin{equation}
    \label{eq:P-Q}
Q := P^{(1)} P^{(2)} \cdots P^{(T)},
\end{equation}

so the $i$-th row $Q_i$ specifies the distribution of $x_T$ given $x_0 = i$, and the predictor outputs the one-hot index of the maximal entry of $Q_i$.

Our analysis focuses purely on \emph{inference time}. The model receives $n$ context samples that share the same relation tuple $(r_1,\dots,r_T)$ and are drawn with $x_0 \sim \mu$ and transitions governed by $\{P^{(t)}\}$. A direct-inference sample reveals only $x_T$; a CoT sample reveals the full path $(x_0,\dots,x_T)$. The modeling choice aligns with recent formalisms that cast decoding and trajectory aggregation as Markov-like dynamics \citep{kim2025metastable} and with algorithmic views of in-context learning \citep{xie2021iclbayes,akyurek2022what,zhang2024trainedICL}.

A key structural distinction is whether $P^{(1)}=\cdots=P^{(T)}$ (homogeneous/aligned) or the kernels differ by step (heterogeneous/misaligned). In the first case, a single trajectory provides $T$ glimpses of the same local rule, allowing per-step votes to be pooled. In the second, per-step observations do not reinforce any single kernel, so pooling does not have the same effect. This is the precise mathematical counterpart of “same skill versus different skills across steps.”

\section{Theoretical Analysis}
\label{sec:theory}
Our goal in this section is to isolate \emph{which factors} govern the benefit of inference-time CoT theoretically. To keep the analysis transparent, we adopt the same decision rule for \textbf{both} direct inference and CoT: the model behaves like a simple \emph{counter} that uses the context samples to estimate class frequencies and then predicts the \emph{hard-max} index (argmax) for the relevant row of the transition kernel (Fig.~\ref{fig:markov_choice}).
Within this lens, two ingredients determine whether CoT helps: (i) \textbf{transition alignment}, and (ii) \textbf{noise}. The first drives the structural $1/T$-type advantage and the second explains when noise amplifies CoT’s relative gains.

\subsection{Direct Inference}

In direct inference, each context sample reveals only the end-to-end output $x_T$ under kernel $Q$, defined in Eq.~\eqref{eq:P-Q}. With the shared ``count-and-argmax" rule, recovering the argmax of each row $Q_i$ reduces to multi-class frequency estimation conditioned on $x_0=i$. Reliable identification requires that every input state appears with nontrivial frequency and that the optimal class has a positive gap.

\begin{assum}[Coverage of distribution]
\label{assum:init dist}
    The multinomial distribution $\mu $ satisfies that the smallest element of $\mu$ is strictly greater than 0 and denoted by $\mu_{\min}$.
\end{assum}

\begin{assum}[Positive margin]
\label{assum:Q margin}
    The ground truth transition matrix $Q$ satisfies that 
    $$Q_{i,j^*(i)}-Q_{ij}\ge\Delta_Q>0$$
    for all $i$ and $j\neq j^*(i)$, where $j^*(i)=\arg\max_{i}Q_{ij}$.
\end{assum}

\noindent Under these conditions, the usual frequency-estimation rate appears:

\begin{thm}[Direct inference]
\label{thm:direct}
    Under Assumption \ref{assum:init dist} and \ref{assum:Q margin}, if the number of samples $n$ satisfies
    $$n= \Theta\left(\frac{\log(k/\delta)}{\mu_{\min}\Delta_Q^2}\right),$$
    then with probability at least $1-\delta$, we have $\hat{j}^*(i)=\arg\max_j Q_{ij}$ for all $i$ with direct inference.
\end{thm}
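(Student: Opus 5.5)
We prove the upper bound (sufficiency) direction in two stages: first each input state $i$ is observed often enough, then, conditional on those counts, the empirical row $\hat Q_i$ is close enough to $Q_i$ that its argmax is correct. The decision rule is the shared count-and-argmax rule. For each $i$, let $N_i=\sum_{s=1}^n \mathbbm{1}\{x_0^{(s)}=i\}$ and $\hat Q_{ij}=N_i^{-1}\sum_{s}\mathbbm{1}\{x_0^{(s)}=i,\,x_T^{(s)}=j\}$, and predict $\hat j^*(i)=\arg\max_j \hat Q_{ij}$. A sufficient condition for correctness on row $i$ is $\max_j|\hat Q_{ij}-Q_{ij}|<\Delta_Q/2$. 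Indeed, Assumption~\ref{assum:Q margin} then gives, for every $j\neq j^*(i)$,
$$\hat Q_{i,j^*(i)}-\hat Q_{ij} > Q_{i,j^*(i)}-Q_{ij}-\Delta_Q\ge 0.$$
A sharper route is to bound each pairwise difference directly: $\hat Q_{i,j^*}-\hat Q_{ij}$ is an average of $N_i$ i.i.d.\ variables in $\{-1,0,1\}$ with mean at least $\Delta_Q$. Hoeffding gives failure probability $\exp(-N_i\Delta_Q^2/2)$ per pair. A union bound over the $k-1$ competitors and the $k$ rows then costs only a $\log(k^2/\delta)=O(\log(k/\delta))$ factor.

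\textbf{Step 1 (coverage).} Because the $x_0^{(s)}$ are i.i.d.\ from $\mu$, each $N_i\sim\mathrm{Bin}(n,\mu_i)$ with $\mu_i\ge\mu_{\min}$ by Assumption~\ref{assum:init dist}. A multiplicative Chernoff bound gives $N_i\ge n\mu_{\min}/2$ with probability at least $1-\exp(-n\mu_{\min}/8)$. A union bound over $i\in[k]$ makes this hold simultaneously with probability $1-\delta/2$ once $n\gtrsim \log(k/\delta)/\mu_{\min}$. This requirement is dominated by the target rate, since $\Delta_Q\le 1$.

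\textbf{Step 2 (conditional concentration).} Conditional on the initial states $x_0^{(1)},\dots,x_0^{(n)}$, the outputs with $x_0=i$ are i.i.d.\ draws from $Q_i$, because the Markov chain is run independently per sample. Applying the pairwise Hoeffding bound with $N_i\ge n\mu_{\min}/2$ and union bounding over the $k(k-1)$ pairs gives total failure probability at most $k^2\exp(-n\mu_{\min}\Delta_Q^2/4)\le\delta/2$ whenever $n\ge 4\log(2k^2/\delta)/(\mu_{\min}\Delta_Q^2)$. Combining the two steps yields the claim with $n=O\bigl(\log(k/\delta)/(\mu_{\min}\Delta_Q^2)\bigr)$.

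\textbf{Main obstacle.} The main subtlety is the randomness of $N_i$: we cannot apply Hoeffding with a random sample size naively. We handle it by conditioning on the $x_0$'s, which leaves the outputs conditionally i.i.d., and then working on the good event from Step 1. If the $\Theta$ is meant to include a matching lower bound, we would argue it with a two-point construction. In row $i$ with $\mu_i=\mu_{\min}$, take two distributions whose top two entries differ by $\pm\Delta_Q$ and swap. Their KL divergence per sample is $O(\Delta_Q^2)$, and only about $n\mu_{\min}$ samples are informative, so Le Cam's method forces $n\mu_{\min}\Delta_Q^2\gtrsim\log(1/\delta)$. Extending this to the $\log k$ factor would need a Fano-type argument over $k$ rows or $k$ candidate argmaxes. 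We expect that extension to be the delicate part; we would likely state it only for the natural count-and-argmax estimator.
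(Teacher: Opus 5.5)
Your proposal is correct and follows essentially the same route as the paper: a multiplicative Chernoff bound with a union bound over $i$ to get $\min_i N_i \ge n\mu_{\min}/2$, then, conditioning on the initial states, a row-by-row pairwise Hoeffding plus union-bound top-1 identification with margin $\Delta_Q$. The paper also proves only the sufficiency direction, so your Le Cam aside on a matching lower bound goes beyond what it argues; your Hoeffding exponent $N_i\Delta_Q^2/2$ for $\{-1,0,1\}$-valued differences is the correct constant, and the paper's discrepancy there is absorbed into its absolute constant.
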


Two observations already connect to our main question. First, the dependence on $\mu_{\min}$ simply reflects coverage: under the counting rule, you cannot learn the argmax for states you rarely observe. Second, the end-to-end margin $\Delta_Q$ may be much \emph{smaller} than per-step margins because local uncertainties compound through $L$ transitions. This foreshadows why CoT—which counts \emph{locally}—can be comparatively robust when transitions align.

\subsection{Chain of Thought}

With CoT, each context sample exposes the entire path $(x_0,x_1,\ldots,x_T)$, so the same counting rule is applied \emph{per step}: estimate the hard-max index of each row of the relevant transition kernel from the observed transitions and then compose these local argmax decisions. To link local and global decisions we assume a standard greedy-to-global consistency:

\begin{assum}[Local-global consistency]
\label{assum:consistent max}
    We assume for all $t$ and $i$, each row $P^{(t)}_i$ of the transition matrix has a single max index and denote this maximum index by $I_t(i)$. We additionally assume that $$I_T\circ\cdots\circ I_1(x_0)=\arg\max_{j}Q_{x_0,j}.$$
\end{assum}

\noindent The impact of CoT then hinges on \textbf{how the per-step kernels relate}. We make this explicit by contrasting the homogeneous and heterogeneous regimes.

\paragraph{Homogeneous (aligned) transitions.} When $P^{(1)}=\cdots=P^{(T)}=:P$, every trajectory provides $T$ observations of the \emph{same} kernel. Under the counting rule, this multiplies the effective number of local votes per trajectory, subject to standard mixing/coverage and a per-step margin:

\begin{assum}
    \label{assum:transition}
    The pseudo-spectral gap of $P$ is $\gamma>0$. The stationary distribution of $P$ is $\pi$ and the minimal element of $\pi$ is positive $\pi_{\min}>0$. The input distribution $\mu$ satisfies $\sqrt{\chi^2(\mu||\pi)}=:\chi_0$. The ground truth transition matrix $P$ satisfies
    $$P_{i,j^*}-P_{ij}\ge \Delta_P>0$$
    for all $i$ and $j\neq j^*(i)$, where $j^*(i)=\arg\max_i{P_{ij}}$.
\end{assum}

\begin{thm}[Homogeneous CoT]
\label{thm:same}
    Under Assumptions \ref{assum:init dist}, \ref{assum:consistent max} and \ref{assum:transition}, if the number of samples $n$ satisfies
    $$n=\Theta\left(\left(\frac{1}{T\pi_{\min}\Delta_P^2r}+\frac{1}{T\pi^2_{\min}r^2}\right)\log(k/\delta)\right),$$
    where $r=1-\frac{\chi_0}{T\pi_{\min}\gamma}$, then with probability at least $1-\delta$, we have $\hat{j}^*(i)=\arg\max_{j}Q_{ij}$ for all $i$ with CoT inference with homogeneous transitions.
\end{thm}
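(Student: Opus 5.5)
The plan is to pool all $nT$ observed transitions $(x_{t-1},x_t)$ across the $n$ trajectories. We then estimate, for each state $i$, the row $P_i$ by the empirical frequencies of successors of $i$. The predicted path is obtained by composing the estimated local argmaxes $\hat I(i)=\arg\max_j \hat P_{ij}$. By Assumption~\ref{assum:consistent max}, $I_T\circ\cdots\circ I_1(i)=\arg\max_j Q_{ij}$. It therefore suffices to show that, with probability at least $1-\delta$, $\hat I(i)=j^*(i)$ for every $i\in[k]$; composing correct local decisions then gives the claim. This event splits into two parts. The first is a \emph{coverage} event: every state $i$ is visited at least $N_i\ge c\,nT\pi_{\min} r$ times as a "from" state among steps $0,\dots,T-1$. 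The second is a \emph{margin} event: conditional on visit counts $N_i$, the empirical successor frequencies identify the argmax.

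For the margin event, we use the Markov property. Conditional on the sequence of visits to $i$, the successors of $i$ are i.i.d.\ draws from $P_i$ (the standard "stack of cards" representation of a Markov chain). The problem then reduces to multinomial argmax identification, exactly as in Theorem~\ref{thm:direct}. We apply Hoeffding to each difference $\hat P_{ij^*}-\hat P_{ij}$ with margin $\Delta_P$ and take a union bound over $j$ and $i$. This shows that $N_i\gtrsim \log(k/\delta)/\Delta_P^2$ suffices. Combined with $N_i\gtrsim nT\pi_{\min}r$, it yields the first term $\frac{\log(k/\delta)}{T\pi_{\min}\Delta_P^2 r}$.

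The coverage event is the main obstacle. The expected number of visits to $i$ in one trajectory is $\sum_{t=0}^{T-1}(\mu P^t)_i$. Writing $\mu P^t-\pi=(\mu-\pi)P^t$ and using the pseudo-spectral gap to bound the $\chi^2$ contraction, we get $\sum_t|(\mu P^t)_i-\pi_i|\lesssim \chi_0/\gamma$; this is where $\pi_{\min}$ enters through $\sqrt{\pi_i}\le 1$. The expected count is therefore at least $T\pi_i-\chi_0/\gamma\ge T\pi_{\min}\,r$ with $r=1-\frac{\chi_0}{T\pi_{\min}\gamma}$, which requires $r>0$.

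For concentration, we would first try the simplest route. The per-trajectory visit counts $V_i^{(m)}\in[0,T]$ are i.i.d.\ across the $n$ trajectories. A Bernstein bound whose variance proxy is controlled by $T\cdot \mathbb{E}V_i$, or a Hoeffding bound normalized appropriately, gives $\sum_m V_i^{(m)}\ge \tfrac12 nT\pi_{\min}r$ with probability $1-\delta/(2k)$ once $n\gtrsim \frac{\log(k/\delta)}{T\pi_{\min}^2r^2}$. This is the second term. If the i.i.d.-across-trajectories argument gives a worse $T$ dependence, the fallback is Paulin's Bernstein inequality for Markov chains with pseudo-spectral gap $\gamma$, applied within trajectories, absorbing $\gamma$ into $r$. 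A final union bound over the $k$ states and the two events completes the sufficiency direction. For the $\Theta$, we would remark that a matching lower bound follows from a two-point argument on a single row with margin $\Delta_P$.
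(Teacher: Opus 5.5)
Your overall architecture is the paper's: pool transitions into $N_i=\sum_{t=0}^{T-1}N_i^{(t)}$, lower-bound $\mathbb{E}N_i$ via pseudo-spectral-gap decay of $\mu P^t$ toward $\pi$ (bias $\lesssim n\chi_0/\gamma$), apply the multinomial top-1 lemma row-wise with margin $\Delta_P$, and compose via Assumption~\ref{assum:consistent max}. The gap is in the step you flag as the obstacle: your ``simplest route'' for concentration does not produce the second term.

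Treat the per-trajectory counts $V_i^{(m)}\in[0,T]$ as i.i.d.\ bounded variables. Hoeffding on $V_i^{(m)}/T$ then needs $n\gtrsim\log(k/\delta)/(\pi_{\min}^2r^2)$. Bernstein with $\mathrm{Var}(V_i)\le T\,\mathbb{E}V_i$ needs $n\gtrsim T\log(k/\delta)/\mathbb{E}V_i$, which can be as large as $\log(k/\delta)/(\pi_{\min}r)$. Neither decreases with $T$, so for large $T$ this term dominates and the structural $1/T$ gain is lost.

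Boundedness plus cross-trajectory independence cannot do better. Take $P=(1-\epsilon)I+\epsilon\mathbf{1}\pi$ and $\mu=\pi$. Then $\chi_0=0$, $r=1$, $\Delta_P\ge 1-2\epsilon$, and Assumption~\ref{assum:consistent max} holds for small $T\epsilon$. If moreover $T\epsilon\ll\pi_{\min}$, each trajectory essentially stays at its initial state, so $V_i^{(m)}\approx T\,\mathbf{1}\{X_0^{(m)}=i\}$. Merely visiting every state with probability $1-\delta$ then needs $n\gtrsim\log(1/\delta)/\pi_{\min}$, however large $T$ is.

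What is missing is a mixing-based variance bound $\mathrm{Var}(V_i)\lesssim T/\gamma$. That is exactly your fallback and the paper's actual argument. Paulin's Bernstein inequality within a trajectory makes $V_i^{(m)}-\mathbb{E}V_i$ sub-exponential with variance proxy $O(T/\gamma)$ and scale $O(1/\gamma)$. A Bernstein bound across the $n$ independent trajectories then gives the fluctuation $\sqrt{c_1 nT\log(k/\delta)/\gamma}$ of Lemma~\ref{lem:min-count-markov}. Paulin's inequality assumes a stationary start, so for $X_0\sim\mu$ add the Cauchy--Schwarz change of measure $\Pr_\mu(A)\le\sqrt{(1+\chi_0^2)\Pr_\pi(A)}$. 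The paper glosses this as ``absorbing the shift into constants''.

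The same example shows the resulting $1/\gamma$ cannot be ``absorbed into $r$'': there $r=1$ for every $\epsilon$, while $\gamma\asymp\epsilon$. Done correctly, coverage needs $n\gtrsim\log(k/\delta)/(T\gamma\pi_{\min}^2r^2)$. Lemma~\ref{lem:min-count-markov} carries this $1/\gamma$, but the theorem statement drops it, so the stated rate holds only if $\gamma$ is treated as a constant.

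Two smaller points. First, the successors of $i$ are not i.i.d.\ $P_i$ \emph{conditional on} $N_i$ or on the visit times, because these depend on the successors (for instance through self-transitions). The stack-of-cards representation only says they are the first $N_i$ cards of an i.i.d.\ stack. So on $\{N_i\ge N_0\}$ you need a bound uniform over $N\ge N_0$; an exponential-supermartingale (maximal Hoeffding) argument supplies it at the cost of constants. The paper omits this step entirely. Second, a two-point argument on one row gives at most $n\gtrsim\log(1/\delta)/(T\Delta_P^2)$, not a matching lower bound for the whole expression. The paper itself proves only sufficiency despite writing $\Theta$.
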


\begin{figure*}[t]
  \centering
   \begin{subfigure}[t]{0.48\textwidth}
  \centering
    \includegraphics[width=\linewidth]{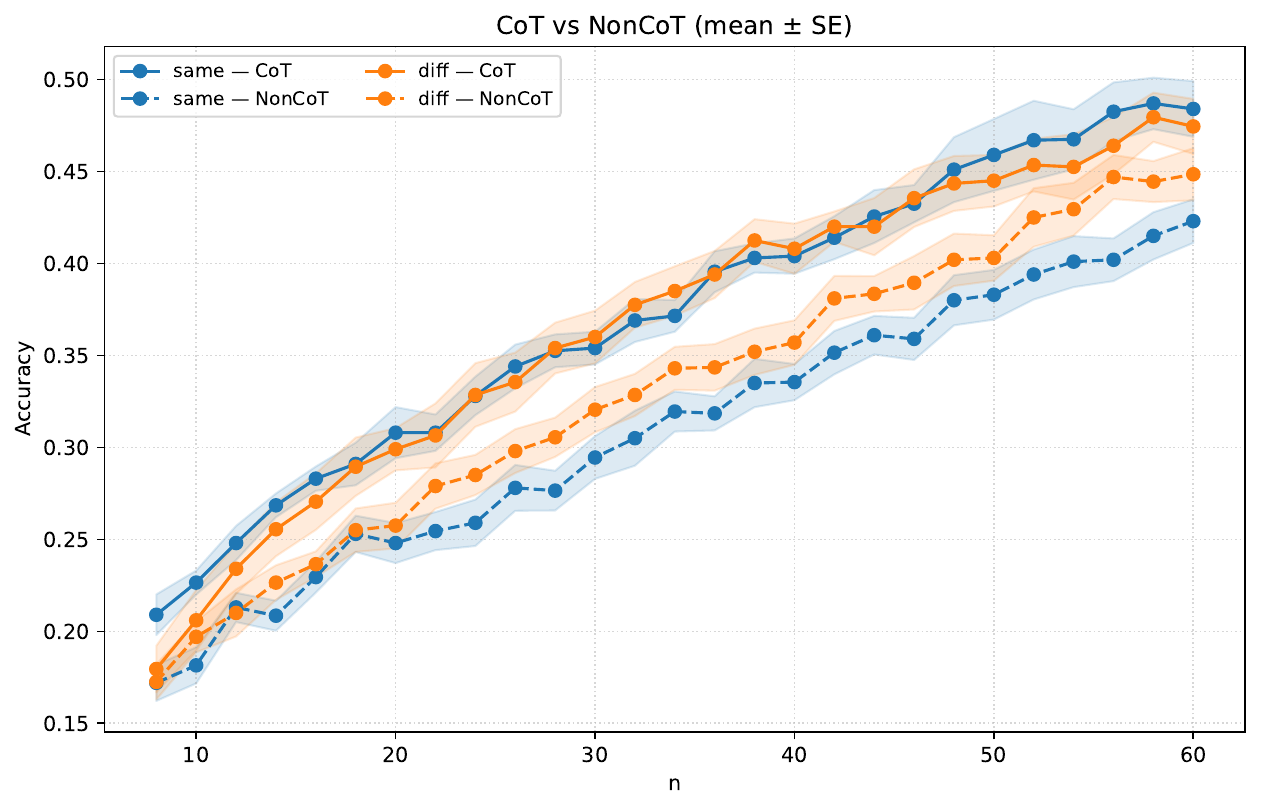}
    \end{subfigure}
  \hfill
   \begin{subfigure}[t]{0.48\textwidth}
  \centering
    \includegraphics[width=\linewidth]{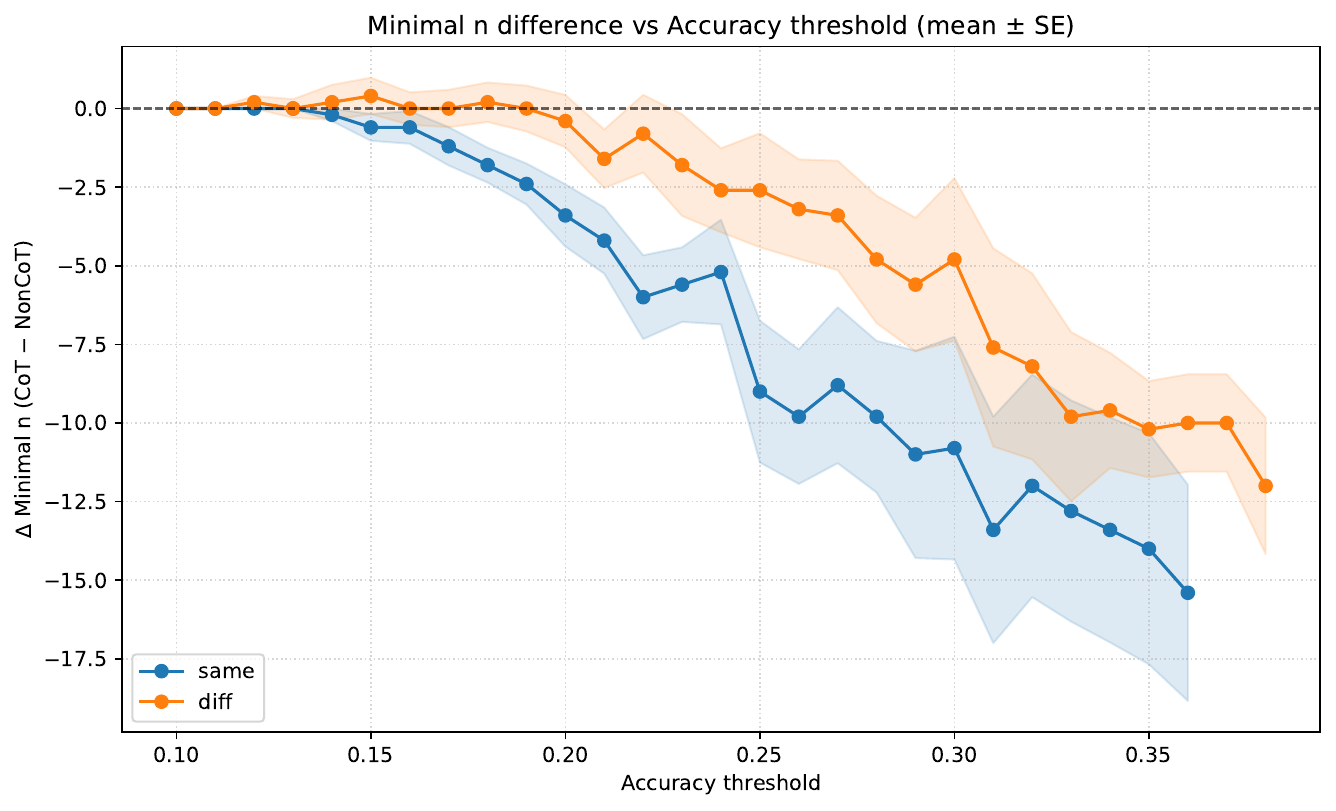}
    \end{subfigure}
  \caption{\textbf{Synthetic alignment study (\textbf{same} vs.\ \textbf{diff}).} \emph{Left:} Accuracy as a function of the number of context samples $n$ comparing CoT (solid) and NonCoT (dashed). CoT yields a consistently larger improvement under \textbf{same} than under \textbf{diff}. \emph{Right:} Sample-complexity proxy $\Delta n(\tau)$ (CoT$-$NonCoT) vs.\ target accuracy $\tau$. More negative values indicate fewer samples required by CoT. The widening gap in the \textbf{same} condition is consistent with the theoretical $1/T$-type gain for aligned kernels. Shaded regions denote mean~$\pm$~SE.}
  \label{fig:synthetic_same_diff}
\end{figure*}

Three messages tie back to our factorization. (A) \textbf{Alignment} produces the structural $1/T$ gain: the same counting rule now accrues $T$ local votes per trajectory for a single kernel. Note that if $T$ is large enough, then the sample complexity of homogeneous CoT will always be smaller than direct inference, showing alignment is the most important factor here. (B) The \textbf{margin that matters} is the local $\Delta_P$, typically larger than the composed $\Delta_Q$; hence CoT’s relative robustness to compounding uncertainty. (C) The term $r$ quantifies how fast trajectories diversify across rows (mixing/coverage), ensuring those local counts are informative rather than redundant.

\paragraph{Heterogeneous (misaligned) transitions.} When the kernels vary by step, a trajectory’s $T$ local observations are \emph{split} across different kernels. The same counting rule still applies, but the local estimates no longer pool toward a single transition. Identification thus has to hold at every time, controlled by stepwise coverage and a uniform local margin:

\begin{assum}[Heterogeneous CoT]
    \label{assum:diff}
    The intermediate distributions satisfy there exists $q_{\min}>0$ such that
    $$q_{\min}\le \min_{0\le t\le T-1}\min_{i}\mu^{(t)}_i.$$
    The ground truth transition matrices satisfy
    $$P^{(t)}_{i,j^*}-P^{(t)}_{ij}\ge \Delta>0$$
    for all $t$, $i$ and $j\neq j^*_t(i)$, where $j_t^*(i)=\arg\max_i{P^{(t)}_{ij}}$.
\end{assum}

\begin{thm}
    \label{thm:diff}
    Under Assumptions \ref{assum:init dist}, \ref{assum:consistent max} and \ref{assum:diff}, if the number of samples $n$ satisfies
    $$n=\Theta\left(\frac{\log(Tk/\delta)}{q_{\min}\Delta^2}\right),$$
    where $r=1-\frac{\chi_0}{T\pi_{\min}\gamma}$, then with probability at least $1-\delta$, we have $\hat{j}^*(i)=\arg\max_{j}Q_{ij}$ for all $i$ with CoT inference with heterogeneous transitions.
\end{thm}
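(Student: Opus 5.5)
The plan is to reduce the statement to $T$ independent instances of the row-argmax estimation problem from Theorem~\ref{thm:direct}, one for each kernel $P^{(t)}$, and then glue them together with a union bound over steps and rows. With CoT, each context trajectory $(x_0,\dots,x_T)$ supplies exactly one transition $(x_{t-1},x_t)$ for each $t$. For fixed $t$, the pairs $\{(x^{(m)}_{t-1},x^{(m)}_t)\}_{m=1}^n$ across the $n$ trajectories are i.i.d. Their marginal is $x_{t-1}\sim\mu^{(t-1)}:=\mu P^{(1)}\cdots P^{(t-1)}$, and conditionally on $x_{t-1}=i$ we have $x_t\sim P^{(t)}_i$. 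So the per-step data for kernel $t$ looks exactly like direct-inference data for a single-step kernel, with input distribution $\mu^{(t-1)}$ and margin $\Delta$. Trajectories are independent of each other, even though the different steps within one trajectory are dependent. The estimator is $\hat I_t(i)=\arg\max_j N_t(i,j)$, where $N_t(i,j)$ counts the transitions $i\to j$ at step $t$. The prediction is $\hat I_T\circ\cdots\circ\hat I_1(x_0)$.

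The key steps, in order, are as follows.
\begin{enumerate}
\item \textbf{Coverage.} Fix $t,i$ and let $n_{t,i}=\sum_m \mathbbm{1}\{x^{(m)}_{t-1}=i\}\sim\mathrm{Bin}(n,\mu^{(t-1)}_i)$. Assumption~\ref{assum:diff} gives $\mu^{(t-1)}_i\ge q_{\min}$, so a multiplicative Chernoff bound yields $n_{t,i}\ge nq_{\min}/2$ with probability $1-\exp(-nq_{\min}/8)$.
\item \textbf{Margin.} Condition on the set of trajectories with $x_{t-1}=i$. The next states $x_t$ for those trajectories are then i.i.d.\ from $P^{(t)}_i$; this holds by the Markov property and independence across trajectories. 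For each $j\neq j^*_t(i)$, the difference $\mathbbm{1}\{x_t=j^*\}-\mathbbm{1}\{x_t=j\}$ is bounded in $[-1,1]$ with mean at least $\Delta$. Hoeffding's inequality then gives $N_t(i,j^*)>N_t(i,j)$ except with probability $\exp(-n_{t,i}\Delta^2/2)$.
\item \textbf{Union bound.} Take a union over the $k-1$ competitors $j$, the $k$ rows, and the $T$ steps. The failure probability is at most $Tk^2\exp(-nq_{\min}\Delta^2/4)+Tk\exp(-nq_{\min}/8)$. This is at most $\delta$ once $n\gtrsim \log(Tk/\delta)/(q_{\min}\Delta^2)$, using $\Delta\le1$ and $\log(Tk^2/\delta)\le 2\log(Tk/\delta)$.
\item \textbf{Composition.} On the good event, $\hat I_t=I_t$ for every $t$. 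Assumption~\ref{assum:consistent max} then gives $\hat I_T\circ\cdots\circ\hat I_1(i)=\arg\max_j Q_{ij}$ for all $i$.
\end{enumerate}
The matching lower direction of $\Theta$ comes from the standard two-point argument used for Theorem~\ref{thm:direct}. It applies to a single step and row with frequency $q_{\min}$ and margin $\Delta$, and I would only cite it.

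The main obstacle I expect is step~2, the conditioning argument. Across different $t$ the counts are correlated, because they come from the same trajectories. Within a fixed $t$, however, independence across trajectories is what matters. So the union bound needs no independence across steps, and per-step conditioning on $x_{t-1}$ is legitimate; I would write this out carefully. I would also remark that the quantity $r$ mentioned in the statement plays no role in this regime, since mixing is replaced by the direct coverage assumption $q_{\min}$.
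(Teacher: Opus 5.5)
Your proposal is correct and follows the paper's proof. Both arguments use a Chernoff bound for coverage $N_i^{(t)}\ge\tfrac12 nq_{\min}$ with a union bound over the $Tk$ (step, row) pairs, then the multinomial top-1 lemma per pair with margin $\Delta$ (which the paper proves by the same Hoeffding-plus-union-over-competitors argument you give), and finally composition via Assumption~\ref{assum:consistent max}. Your indexing ($x_{t-1}\sim\mu^{(t-1)}$ feeding $P^{(t)}$) and explicit conditioning on $x_{t-1}^{(1:n)}$ are cleaner than the paper's. Note that the paper proves only the sufficient direction, so there is no two-point lower bound for Theorem~\ref{thm:direct} to cite; none is needed for the statement as written.
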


Here there is no guaranteed $1/T$ speedup. The same trajectory yields evidence about \emph{different} kernels, so the counting rule cannot pool those votes as in the aligned case, and a $\log T$ term appears instead.
\begin{remark}Comparisons between Sample Complexity of Heterogeneous CoT and Direct Inference.
    \begin{enumerate}
        \item $q_{\min}\le \min_{i}\mu_i^{(0)}=\mu_{\min}$, showing that the coverage in misaligned CoT is worse than direct inference.
        \item Both cases $\Delta_Q>\Delta$ and $\Delta_Q<\Delta$ exist. For the former case one can take $$P^{(1)}=\begin{pmatrix}
            0.6 & 0.4 \\
            0.1 & 0.9
        \end{pmatrix}\ \textrm{and} \ P^{(2)}=\begin{pmatrix}
            0.9 & 0.1 \\
            0.3 & 0.7
        \end{pmatrix},$$ and for the latter case $$P^{(1)}=\begin{pmatrix}
            0.8 & 0.2 \\
            0.8 & 0.2
        \end{pmatrix}\ \textrm{and}\ P^{(2)}=\begin{pmatrix}
            0.7 & 0.3 \\
            0.3 & 0.7
        \end{pmatrix}.$$
    \end{enumerate}
\end{remark}

This remark shows that the sample complexity of heterogeneous CoT can be smaller than direct inferece, due to the $\log T$ factor, worse coverage and possibly smaller margin.
Nonetheless, heterogeneous CoT can still be competitive when the per-step margin $\Delta$ is substantially larger than the composed margin $\Delta_Q$, which is the case in our experiments.

The conclusions of this section are as follows.
\begin{itemize}
\item Under a common, simple \emph{count-and-argmax} decision rule, the factor that controls structural gains is \textbf{transition alignment}: identical per-step kernels yield a $1/T$-type reduction in sample complexity; different kernels do not.
\item The factor that controls sensitivity is the \textbf{noise (margin)}: CoT depends on the local margin $\Delta_P$, while direct inference depends on the composed margin $\Delta_Q$. Because $\Delta_Q$ typically contracts faster under noise, CoT’s \emph{relative} advantage grows as intermediate-step noise increases.
\end{itemize}



\section{Proof Sketch}
\label{sec:proof sketch}
The bounds in Section~\ref{sec:theory} reduce hard-max prediction in our Markovian model to a sequence of elementary \emph{top-1 identification} problems for multinomial distributions. Intuitively, under the count-and-argmax rule, we only need (i) enough coverage of the relevant rows to estimate their class frequencies, and (ii) a positive margin so the empirical argmax matches the true argmax. Direct inference and homogeneous CoT each require identifying the top class for one $k\times k$ transition matrix (either $Q$ or $P$), whereas heterogeneous CoT requires doing so for all $T$ per-step matrices. This is why a uniform top-1 identification lemma for multinomials is the core primitive behind all three theorems.

\begin{lem}[Chapter 2, \citep{BoucheronLugosiMassart2013}]
\label{lem:multinomial-top1}
For a multinomial distribution $p$, we write $p_{(1)}\ge p_{(2)}\ge\cdots$ for its ordered elements. We assume the margin $\Delta_p:=p_{(1)}-p_{(2)}>0$. If 
there exist absolute constants $C>0$ such that 
$$N\ge \frac{C}{\Delta^2}\,\log\!\frac{k}{\delta},$$
then the empirical argmax equals to the argmax with probability at least $1-\delta$.
\end{lem}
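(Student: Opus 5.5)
The plan is to reduce top-1 identification to a union bound over pairwise comparisons, controlling each pairwise count difference with Hoeffding's (or Bernstein's) inequality. Fix the multinomial $p$ on $[k]$ with top index $j^*$ and margin $\Delta_p=p_{(1)}-p_{(2)}$ (the $\Delta$ in the display). Draw $N$ i.i.d.\ samples and let $\hat p_j$ be the empirical frequencies. The empirical argmax fails to equal $j^*$ only if $\hat p_j\ge \hat p_{j^*}$ for some $j\neq j^*$. So it suffices to bound each of these $k-1$ events and sum the bounds.

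For a fixed $j\neq j^*$, I will write
$$\hat p_{j^*}-\hat p_j=\frac1N\sum_{s=1}^N Z_s,\qquad Z_s=\mathbbm{1}\{X_s=j^*\}-\mathbbm{1}\{X_s=j\}.$$
The $Z_s$ are i.i.d., take values in $\{-1,0,1\}$ (range $2$), and have mean $p_{j^*}-p_j\ge\Delta_p$. Hoeffding's inequality then gives
$$\Pr\bigl(\hat p_{j^*}-\hat p_j\le 0\bigr)\le \exp\bigl(-N\Delta_p^2/2\bigr).$$
A union bound over the $k-1$ competitors bounds the failure probability by $k\exp(-N\Delta_p^2/2)$. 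This is at most $\delta$ as soon as
$$N\ge \frac{2}{\Delta_p^2}\log\frac{k}{\delta},$$
which is the claim with $C=2$.

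There is no real obstacle here; the only point needing care is interpretive. The statement writes $\Delta$ for $\Delta_p$, and tie-breaking must be handled. I treat a tie as failure, which the non-strict inequality $\hat p_j\ge\hat p_{j^*}$ already covers, so the bound holds for any tie-breaking rule.

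If a sharper, variance-sensitive constant is wanted, I would use Bernstein's inequality instead, since $\mathrm{Var}(Z_s)\le p_{j^*}+p_j$. The stated form only needs the Hoeffding version, which is the Chapter 2 result cited from \citep{BoucheronLugosiMassart2013}.
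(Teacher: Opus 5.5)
Your proof is correct and follows the same route as the paper's appendix proof (Lemma~\ref{lem:top1-app}): write $\hat p_{j^*}-\hat p_j$ as an average of the pairwise differences $Z_s=\mathbf{1}\{X_s=j^*\}-\mathbf{1}\{X_s=j\}$, apply Hoeffding, and take a union bound over the $k-1$ competitors. Your exponent $N\Delta_p^2/2$ is the correct Hoeffding rate for variables of range $2$; the paper writes $\exp(-2N\Delta_p^2)$, which overstates the exponent by a factor of $4$, but this is harmless because $C$ is an unspecified absolute constant.
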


The lemma converts a margin lower bound into the familiar $O(\Delta^{-2}\log(k/\delta))$ sample requirement, so the main task is to lower bound the effective counts with which each row is observed. Let $N_i^{(0)}$ be the number of context samples whose initial state is $x_0=i$, and $N_i^{(t)}$ the number of visits to state $i$ at step $t$. For direct inference, it suffices to control $\min_i N_i^{(0)}$ because we only estimate the row of $Q$ indexed by $x_0$. A Chernoff bound (e.g., \citealp{Hoeffding1963}) gives $\min_i N_i^{(0)} \gtrsim c\,N\,\mu_{\min}$ with high probability for an absolute constant $c>0$, which yields Theorem~\ref{thm:direct}.

For heterogeneous (misaligned) CoT, we must identify the top class in each per-step kernel $P^{(t)}$, so we need a uniform lower bound on the step-wise counts $\min_{t,i} N_i^{(t)}$. Since the states at step $t$ are distributed as $\mu^{(t)}=\mu P^{(1)}\cdots P^{(t)}$, we obtain $\min_i N_i^{(t)} \gtrsim c\,N\,\min_i \mu^{(t)}_i$; invoking the assumption $q_{\min}\le \min_{t,i}\mu^{(t)}_i$ and a union bound over $t\in\{0,\dots,T-1\}$ controls the simultaneous deviations across steps. Plugging this coverage into the multinomial lemma with the local margin $\Delta$ gives the heterogeneous CoT bound (Theorem~\ref{thm:diff}), including the logarithmic dependence on $T$ from the union bound.

The homogeneous (aligned) CoT case is more delicate and also where the structural gain appears. Because each trajectory contributes all $T$ transitions of the same kernel $P$, the total number of usable local observations scales like $NT$. Define the aggregated counts $N_i:=\sum_{t=0}^{T-1} N_i^{(t)}$. If the initial distribution $\mu$ were already stationary for $P$, then each step would sample from $\pi$ and we would have $N_i\approx NT\,\pi_i$ up to concentration, immediately yielding a $1/T$ improvement when the lemma is applied with margin $\Delta_P$. In general $\mu$ need not equal $\pi$, so we quantify the bias and dependence via the $\chi^2$-divergence $\chi_0$ between $\mu$ and $\pi$ and the pseudo-spectral gap $\gamma$ of $P$. These two quantities govern how quickly the chain forgets its start and how many “effectively independent’’ transitions we get along a path. The next bound formalizes this intuition.

\begin{lem}
\label{lem:min-count-markov}
    There exist absolute constants $c_1,c_2>0$ such that, for any $\delta\in(0,1)$, with probability at least $1-\delta$,
\[
\min_{i} N_i
\ge
NT\,\pi_{\min}
-
\frac{N\,\chi_0}{\gamma}
-
\sqrt{\frac{c_1\,NT}{\gamma}\,\log\!\frac{k}{\delta}}
\;-\; c_2,
\]
where $\pi_{\min}:=\min_i \pi_i$. 
\end{lem}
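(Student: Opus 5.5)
The plan is to split $N_i$ into its mean and a fluctuation, $N_i = \mathbb{E}N_i + (N_i-\mathbb{E}N_i)$. The mean is controlled through the mixing of $P$ started from $\mu$; the fluctuation is controlled by a Bernstein-type concentration inequality for Markov chains with a pseudo-spectral gap, such as Paulin's inequality. A union bound over $i\in[k]$ then gives the stated form. Write $N_i=\sum_{m=1}^N S_i^{(m)}$ with $S_i^{(m)}:=\sum_{t=0}^{T-1}\mathbbm{1}\{x^{(m)}_t=i\}$, the visit count along the $m$-th trajectory. The $N$ trajectories are i.i.d. Each $S_i^{(m)}$ is an additive functional of length $T$ of the chain $P$ started from $\mu$.

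\textbf{Bias term.} By linearity, $\mathbb{E}N_i = N\sum_{t=0}^{T-1}(\mu P^t)_i$. I would write $(\mu P^t)_i = \pi_i + \langle \mu P^t/\pi - 1, \mathbbm{1}_i\rangle_\pi$ and use Cauchy--Schwarz in $L^2(\pi)$, together with the contraction $\|\mu P^t/\pi-1\|_{L^2(\pi)}\le (1-\gamma)^{t/2}\chi_0$ (up to the usual caveats relating the pseudo-spectral gap to powers of $P$). This gives $|(\mu P^t)_i-\pi_i|\le \sqrt{\pi_i}\,(1-\gamma)^{t/2}\chi_0\le (1-\gamma)^{t/2}\chi_0$. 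Summing the geometric series over $t$ gives a total bias of $O(\chi_0/\gamma)$ per trajectory. Hence $\mathbb{E}N_i\ge NT\pi_i - N\chi_0/\gamma$ after absorbing constants; any leftover additive slack goes into $c_2$.

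\textbf{Fluctuation term.} For a single trajectory, Paulin's Bernstein inequality for chains with pseudo-spectral gap $\gamma$ bounds the variance proxy of $S_i^{(m)}$ by $O(T\pi_i/\gamma)\le O(T/\gamma)$, with increments bounded by $1$. Because the trajectories are independent, the variance proxies add across $m$, giving a sub-Gaussian/sub-exponential tail for $N_i-\mathbb{E}N_i$ with variance proxy $O(NT/\gamma)$. The lower-tail deviation is then at most $\sqrt{c_1(NT/\gamma)\log(k/\delta)}$ plus a lower-order range term. I would absorb that range term into $c_2$, or into the square root when $NT/\gamma\gtrsim\log(k/\delta)$. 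A union bound over the $k$ states gives the $\log(k/\delta)$ factor. Taking the minimum over $i$ and using $\pi_i\ge\pi_{\min}$ completes the argument.

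\textbf{Main obstacle.} The main difficulty is that the chain is started from $\mu\neq\pi$, while Paulin's inequality is cleanest in the stationary case. I would first try a change of measure: bound $\Pr_\mu(A)\le \sqrt{1+\chi_0^2}\,\sqrt{\Pr_\pi(A)}$, apply this to the single combined event across all $N$ trajectories, and pay for it only in constants and logarithms. Alternatively, one could centre each trajectory's sum at its own nonstationary mean and apply a martingale (Poisson-equation) decomposition with a solution bounded by $O(1/\gamma)$. That route gives the same $\sqrt{NT\log(k/\delta)/\gamma}$ rate directly, with the boundary terms of the decomposition feeding into $c_2$. Keeping the constants absolute, independent of $\chi_0$ in the fluctuation term, is the delicate point. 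If the change of measure inflates the bound, I would fall back on the martingale route.
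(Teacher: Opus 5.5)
Your main line is the paper's proof. The paper also splits each per-trajectory visit count into mean plus fluctuation, and bounds the bias by summing the geometric TV decay from Paulin's Prop.~3.4 to get $O(\chi_0/\gamma)$. It then takes a per-trajectory sub-exponential tail from Paulin's Bernstein inequality (variance proxy $O(T/\gamma)$, scale $O(1/\gamma)$), sums over the $N$ independent trajectories with a Bernstein inequality, and union-bounds over $i\in[k]$, absorbing leftovers into constants.

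\textbf{Nonstationary start.} Here you go beyond the paper, which never addresses it: it applies the stationary-start Bernstein bound directly to the $\mu$-started chain. You are right to flag this, but your first fix does not work as stated.

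\emph{Joint change of measure fails.} For the joint event over all $N$ trajectories, the likelihood ratio is $\prod_{\ell}\mu(X^{(\ell)}_0)/\pi(X^{(\ell)}_0)$. Its second moment under $\pi$ is $(1+\chi_0^2)^N$, not $1+\chi_0^2$. The change of measure therefore adds $\tfrac N2\log(1+\chi_0^2)$ to the exponent. The resulting bound only controls deviations of order $N\sqrt{T\log(1+\chi_0^2)/\gamma}$. That term grows like $N\sqrt{T}$, so it cannot be absorbed into either $N\chi_0/\gamma$ or $\sqrt{NT\log(k/\delta)/\gamma}$.

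\emph{Per-trajectory version works, with a cost.} Do the change of measure per trajectory:
$\Pr_\mu(|S_i^{(m)}-T\pi_i|\ge u)\le\sqrt{2(1+\chi_0^2)}\,\exp\bigl(-c\gamma u^2/(2(T+u))\bigr)$.
Then recentre at the true mean (a shift of at most $C\chi_0/\gamma$) and apply the independent-sum Bernstein step. This gives the stated shape. However, the prefactor inflates the variance and scale proxies by a factor of order $1+\log(1+\chi_0^2)$, so $c_1$ is not absolute. The paper's constants are not truly absolute either, since it absorbs $C\chi_0/\gamma$ and $\gamma^{-1}\log(k/\delta)$ into them.

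\textbf{Martingale fallback.} This route is less direct than you claim. In general the gap only yields $\|g\|_\infty\lesssim t_{\mathrm{mix}}\lesssim(1+\log(1/\pi_{\min}))/\gamma$ for the Poisson solution $g$, not $O(1/\gamma)$. Bounded-difference concentration with increments of that size gives a variance proxy $T\|g\|_\infty^2$, of order $T/\gamma^2$ rather than $T/\gamma$. Reaching $T/\gamma$ needs Freedman's inequality plus high-probability control of the predictable quadratic variation.

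\textbf{Range term threshold.} The scale parameter in Paulin's bound is $1/\gamma$, so the range term is $\gamma^{-1}\log(k/\delta)$. It is dominated by the square-root term when $NT\gamma\gtrsim\log(k/\delta)$, not when $NT/\gamma\gtrsim\log(k/\delta)$.
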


\begin{remark}
    \begin{enumerate}
        \item The bias term $N\chi_0/\gamma$ comes from the the error between $\mu P^t$ and $\pi$: $\frac{1}{T}\sum_{t=0}^{T-1}\|\mu P^t-\pi\|_{\mathrm{TV}}\lesssim \chi_0/(T\gamma)$ \citep{paulin2015psg}.
        \item 
The fluctuation term derives from Bernstein–type concentration for bounded functions along Markov chains with pseudo–spectral gap \citep{paulin2015psg}.
    \end{enumerate}
\end{remark}

\begin{figure*}[t]
  \centering
  \begin{subfigure}[t]{0.48\textwidth}
  \centering
    \includegraphics[width=\linewidth]{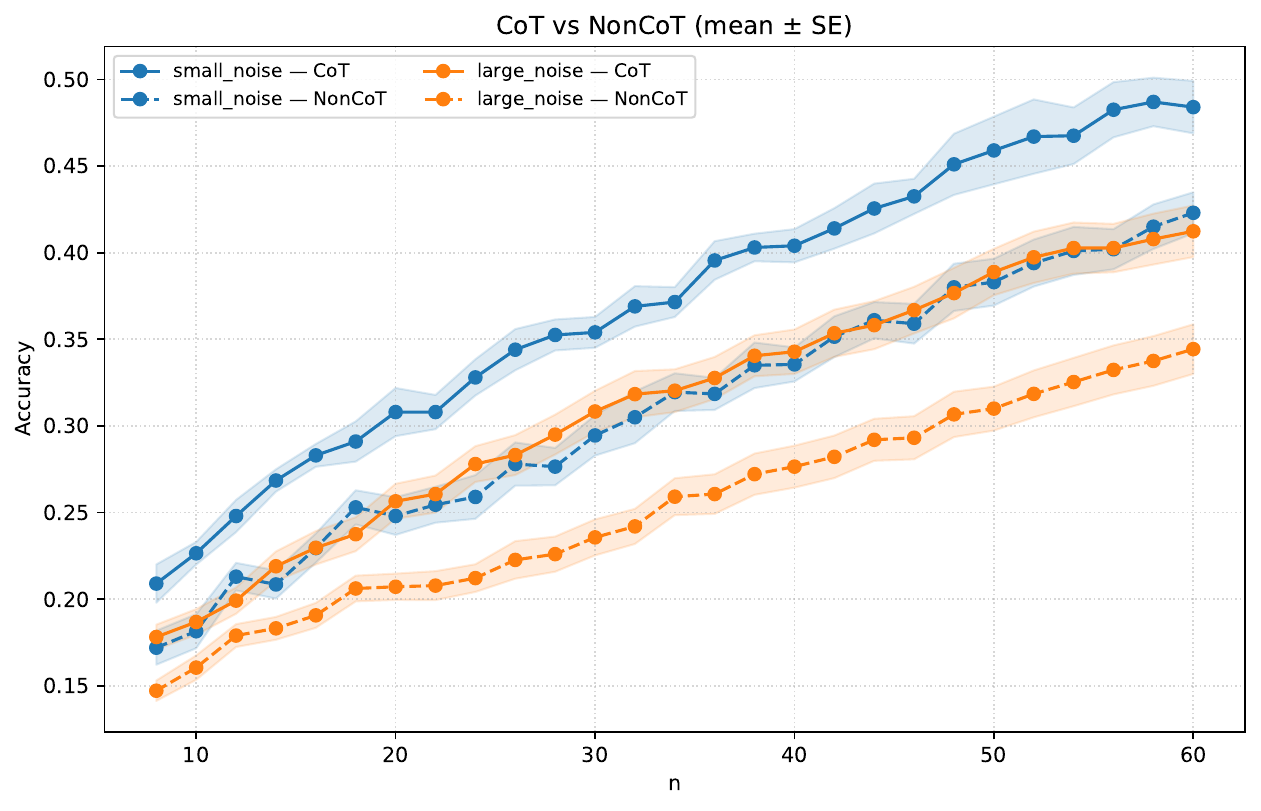}
    \end{subfigure}
  \hfill
  \ \begin{subfigure}[t]{0.48\textwidth}
  \centering
    \includegraphics[width=\linewidth]{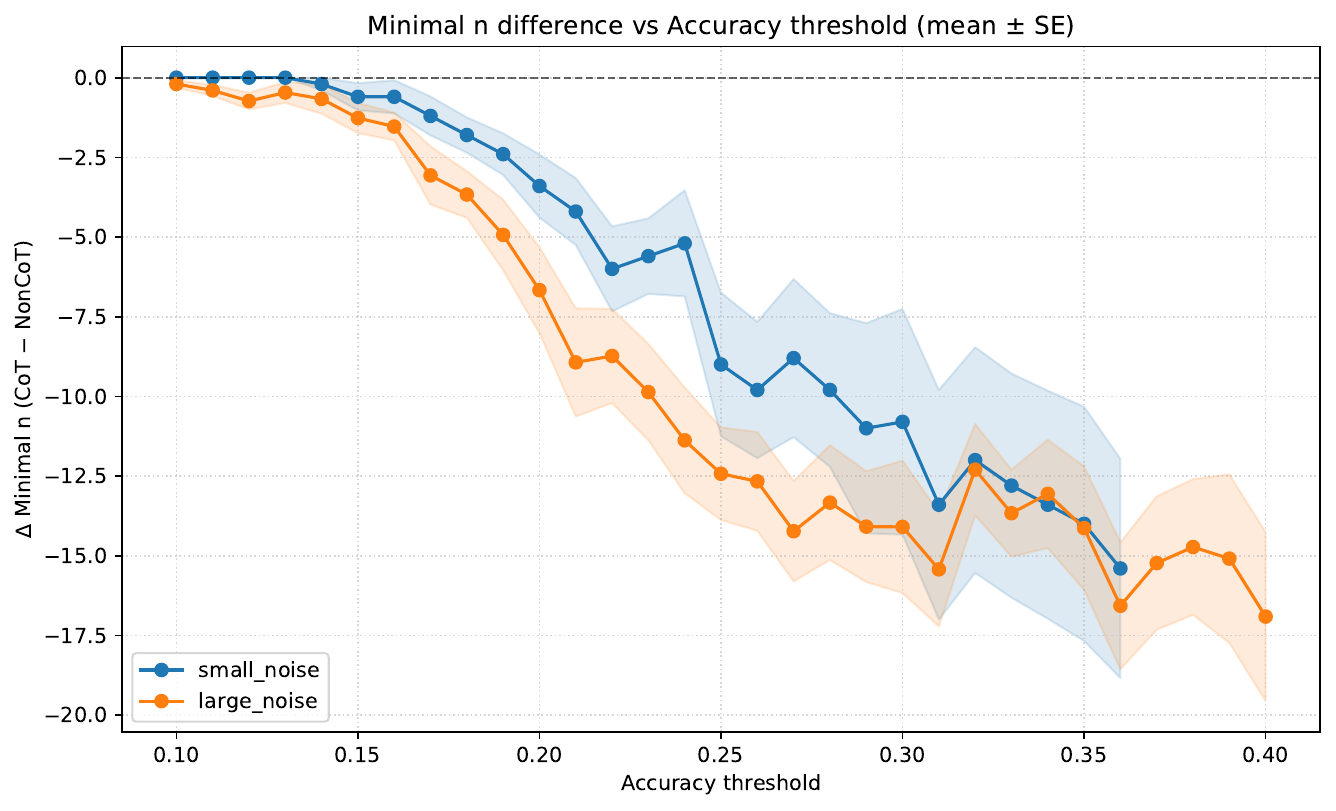}
    \end{subfigure}
  \caption{\textbf{Noise ablation under aligned transitions.} \emph{Left:} Accuracy vs.\ $n$ for CoT/NonCoT under \textbf{small} and \textbf{large} intermediate-step noise. \emph{Right:} $\Delta n(\tau)$ vs.\ $\tau$. CoT reduces the required context budget more when noise is higher, reflecting that the global margin $\Delta_Q$ shrinks faster than the local margin $\Delta_P$ as stochasticity increases. Shaded regions denote mean~$\pm$~SE.}
  \label{fig:synthetic_noise}
\end{figure*}

Combining this coverage bound with the multinomial lemma yields the homogeneous CoT rate in Theorem~\ref{thm:same}: the leading $1/T$ term reflects $T$ informative transitions per trajectory for the same kernel, and the factor $r=1-\chi_0/(T\pi_{\min}\gamma)$ captures the erosion due to initialization bias and mixing.


\section{Experiments}
\label{sec:exp}
The empirical benefits of chain-of-thought (CoT) prompting have been extensively documented in prior work~\citep{sprague2025cot}. Our experiments are therefore not intended as a continuation of those results, but instead as a controlled setup that mirrors our theoretical analysis. In particular, real-world benchmarks make it difficult to disentangle the effect of CoT from other confounding factors, such as prior knowledge embedded in pretrained LLMs. To address this, we first design synthetic tasks that the model has never encountered before, allowing us to isolate and probe the theoretical factors of interest.
\subsection{Synthetic experiment setup}
Our synthetic benchmark is designed to cleanly manipulate the two factors singled out by the theory, \emph{transition alignment} and \emph{intermediate-step noise}, while holding everything else fixed. Each instance is a two-step process. We sample a triple $(x_0,r_1,r_2)$ where $x_0$ is an integer state and each $r_t$ indexes a small stochastic \emph{local rule} acting on the current state (e.g., with probability $0.8$ apply $+1$, otherwise apply $-2$). The \textbf{same} condition enforces the rule of $r_1$ aligns with $r_2$, whereas $r_1$ and $r_2$ represents different local rules in \textbf{diff} condition. For each condition we compare \textbf{CoT} against \textbf{NonCoT}, where we give a specific LLM $n$ context samples generated by the stochastic calculation and expect the model to compute the hard-max result for the query. Note that in this task, we design the random local rule to satisfy the consistency in Assumption \ref{assum:consistent max}, enforcing the hard-max selected by the sequential local rule or the global rule are the same. We sweep $n\in\{8,\ldots,60\}$ and aggregate results over multiple random seeds.

There are several reasons why we select this setup: (i) Enforcing \textbf{same} vs.\ \textbf{diff} isolates the notion of \emph{transition alignment}. (ii) The rule family is intentionally simple so that the ground-truth hard-max at each step and the composed end-to-end hard-max are unambiguous. This lets us measure accuracy without confounding from parsing or generation. (iii) Noise is controlled via the rule probabilities, allowing us to increase the chance that the local hard-max is flipped while leaving everything else unchanged. This separation mirrors the theoretical quantities $\Delta_P$ and $\Delta_Q$. These advantages avoid spurious gains from unrelated factors in complicated experiments like \citep{sprague2025cot,wei2022chain,wang2023self,yao2023tree}.

\textbf{Metrics.} Accuracy is the fraction of test queries whose final prediction equals the end-to-end hard-max index (which, by construction, agrees with composing the per-step hard-max indices). To visualize sample complexity, for a target accuracy threshold $\tau$ we compute the least $n$ that reaches $\tau$ for CoT and NonCoT and plot $\Delta n(\tau)=n_{\text{CoT}}(\tau)-n_{\text{NonCoT}}(\tau)$. Negative values mean CoT achieves the target with fewer samples.

\subsection{Synthetic results and analysis}
\paragraph{Alignment vs.\ misalignment.} From Fig.~\ref{fig:synthetic_same_diff}, CoT helps in both \textbf{same} and \textbf{diff}, but the effect size depends strongly on alignment. With aligned kernels (\textbf{same}), CoT consistently dominates NonCoT over the full range of $n$, and the gap widens at higher accuracy thresholds. With misaligned kernels (\textbf{diff}), improvements are present but uniformly smaller, and can flatten at stricter thresholds. This pattern matches the theory: when transitions align, each trajectory yields multiple informative votes for the \emph{same} kernel (here $T{=}2$). When transitions differ, votes are split across kernels and cannot deliver the same $1/T$ efficiency. Note that for misaligned kernel case, there are $n$ such that CoT is not better than direct inference, which also matches our analysis.

\textbf{Noise sensitivity.} By Fig.~\ref{fig:synthetic_noise}, increasing intermediate-step noise leads to a larger \emph{relative} advantage for CoT. The reason is structural: higher noise reduces both the local margin $\Delta_P$ and the global margin $\Delta_Q$, but due to compounding across steps, $\Delta_Q$ shrinks faster. CoT is therefore more robust to noise than estimating the end-to-end hard-max from final outcomes alone. This is consistent with empirical observations that noisy or unfaithful chains can change outcomes markedly and that aggregation over trajectories helps \citep{prabhakar2024deciphering,wang2023self}.



\begin{wrapfigure}{r}{0.52\textwidth}
  \centering
  \includegraphics[width=\linewidth]{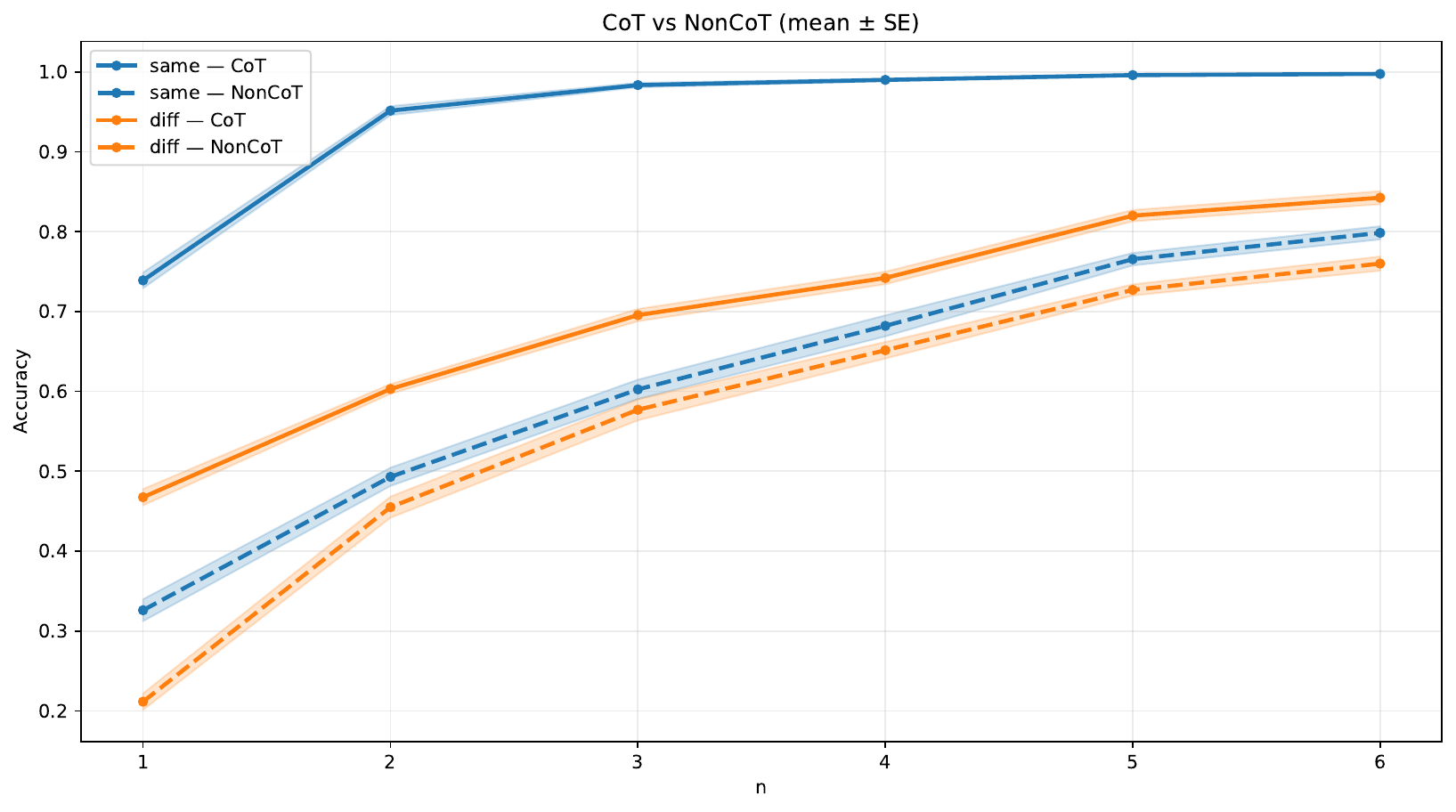}
  \caption{\textbf{Multi-step modular addition (practical check).} Accuracy vs.\ $n$ for CoT and NonCoT when both steps add the \textbf{same} number versus \textbf{different} numbers (fixed modulus). The larger CoT gain in the aligned (\textbf{same}) case corroborates the synthetic findings in a more realistic arithmetic setting. Error bands show mean~$\pm$~SE.}
  \label{fig:mod_add}
\end{wrapfigure}
\subsection{Realistic experiment: modular addition}
While the synthetic data closely mirrors our theoretical setup, it lacks practical relevance. To bridge theory and practice, we therefore turn to a simple arithmetic problem, multi-step modular addition, that has clear real-world value yet remains structured enough for controlled analysis. Specifically, the task is to calculate long additions modular a certain number. To avoid conflating our discussion with heterogeneous step difficulties, we design a simplified version where each step involves an addition of comparable form, differing only in whether all steps add the \emph{same} or \emph{different} numbers.

We compare CoT and NonCoT when all steps add the same number versus different numbers. The result in Fig.~\ref{fig:mod_add} echoes the synthetic study: CoT delivers a substantially larger boost in the \textbf{same} condition, while improvements are smaller in \textbf{diff}. This task is less ``clean" (it involves natural-language templates and minor formatting variations), but it is more practical and still isolates the alignment factor: when both steps share the same local operation, trajectories provide reusable per-step evidence toward the same kernel; on the contrary, the benefit of per-step counting shrinks. Together with the synthetic results, this supports the claim that \emph{transition alignment} is a decisive driver of CoT’s inference-time sample efficiency.
\subsection{Realistic experiment: City--State rankings}
\label{sec:city_state}

We test transition alignment on a corpus-based task built from U.S. city/state rankings under two criteria:
population and area. We collected the data and built a two-hop QA dataset. Each instance in the dataset directly asks for \emph{``the $X$-th largest city in the $Y$-th largest state''},
where ``largest'' is instantiated by a chosen criterion for the state rank and a chosen criterion for the city-within-state rank.
We treat the criterion (population vs.\ area) as the step ``skill'': \textbf{same-skill} uses the same criterion for both steps
(pop--pop or area--area), while \textbf{diff-skill} mixes criteria (pop--area or area--pop).

We sweep $n\in\{2,4,6,8,10\}$ in-context demonstrations and compare CoT vs.\ NonCoT (full details in Appendix~\ref{app:city_state_details}).
Fig.~\ref{fig:city_state} shows that CoT improves over NonCoT in both settings, with a consistently larger gain in the aligned
same-skill condition.

\begin{figure}[t]
  \centering
  \includegraphics[width=\linewidth]{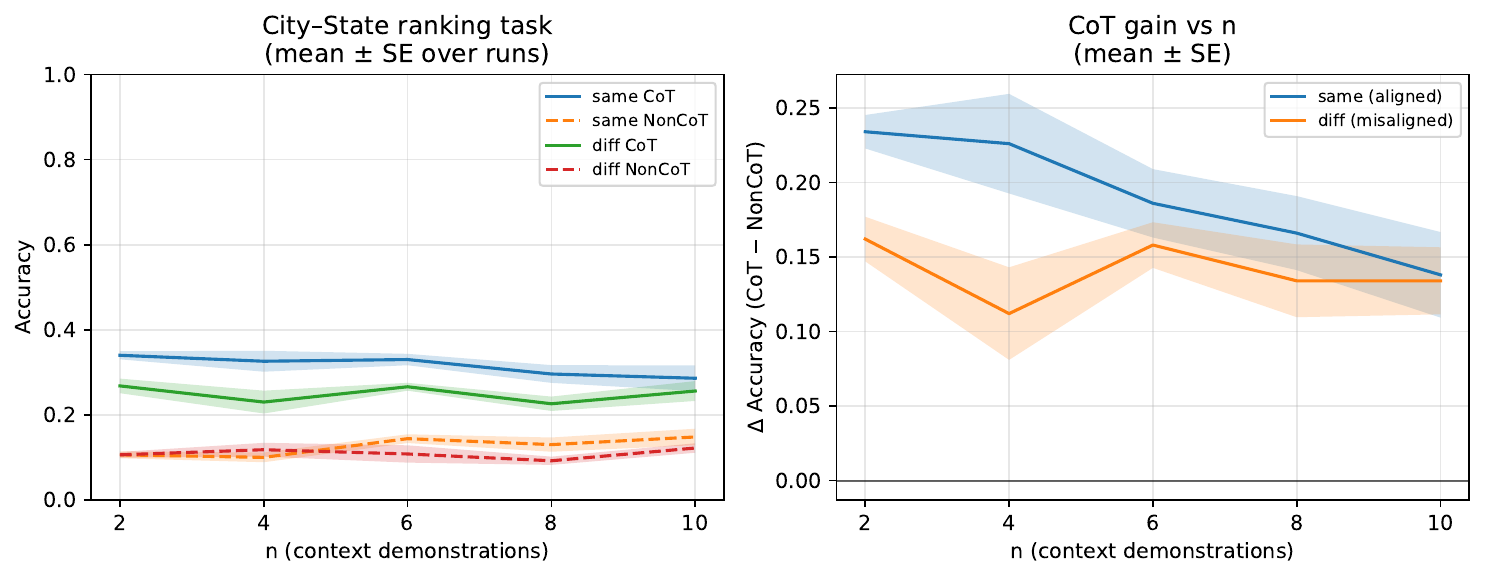}
  \caption{\textbf{City--State rankings.} Accuracy vs.\ $n$ (left) and CoT gain $\Delta\text{Acc}$ (right) under
  \textbf{same-skill} vs.\ \textbf{diff-skill}. Error bands: mean~$\pm$~SE.}
  \label{fig:city_state}
\end{figure}

\section{Conclusion and Discussions}
We provided a mechanism-level account of when CoT improves inference-time sample efficiency by modeling stepwise reasoning as a finite-state Markov chain, motivated by the autoregressive nature of CoT and the finite context windows in practice.
Two structural properties govern the performance of CoT: 
\begin{enumerate*}[label=(\roman*)]
    \item \textbf{transition alignment} that characterizes whether all steps share the same local transition, and
    \item \textbf{decision margins/noise} that describes the per-step confidence.
\end{enumerate*}
We rigorously establish the decisive effects of both properties. In particular, we show that 
\begin{enumerate*}[label=(\roman*)]
    \item CoT yields a $1/T$ gain over direct inference only when the local transitions in successive steps are aligned; while
    \item the decision margins control the gain of CoT, which improves when per-step evidence is comparatively more robust than end-to-end decisions.
\end{enumerate*}
These theoretical findings are corroborated by controlled experiments on synthetic and modular addition tasks.

\paragraph{Implications for implicit thinking.} Our Markovian formulation treats multi-step reasoning as a trajectory of latent states with step-wise transitions. An explicit CoT transcript is one of the intuitive ways to expose this trajectory. 
Whether the intermediate reasoning can be reused across steps, the efficiency gain of CoT in our analysis comes from the underlying dynamics, instead of expressing the intermediate states as human-readable text. 
This perspective casts light on implicit thinking: one can keep multi-step state evolution, but change the interface that reveals intermediate states (e.g., compress them or not reveal them at all). Empirically, this suggests comparing implicit and explicit interfaces under matched test-time compute. It is also helpful to control for the same structural descriptors in our theory (e.g., alignment and intermediate noise), so we can separate improvements in internal dynamics from improvements due to the readout format.

\paragraph{Future directions.}
We see two complementary future avenues. From the explanatory perspective, existing works on CoT theory mostly relies on simplified abstractions. A next step is to push toward more practical models that still admit analysis, e.g., higher-order or semi-Markov structure, hierarchical/graph decompositions, or continuous latent states. Then the factors can be characterized across a wider range of real reasoning pipelines.
From the application perspective, we seek to extend the measurement of the key structural properties identified by the theory to complex real-world tasks, which can potentially steer the design of more effective, aligned, and robust contexts for CoT prompting.

\newpage

\bibliography{bib}


\newpage
\appendix

\section{Preliminaries and Core Lemma}
\label{app:prelims}

\paragraph{Notation.}
We write $[k]=\{1,\dots,k\}$ for the finite state space.
Vectors are row vectors unless stated otherwise.
For a probability vector $p\in\Delta^{k-1}$, let $p_{(1)}\ge p_{(2)}\ge\cdots$ denote the order statistics of its coordinates and define the \emph{(row) margin}
\[
\mathrm{margin}(p):=p_{(1)}-p_{(2)}.
\]
For a row-stochastic matrix $M\in\mathbb{R}^{k\times k}$, we define
\(
\mathrm{margin}(M):=\min_{i\in[k]}\mathrm{margin}(M_i).
\)

\paragraph{Divergences and distances.}
For probability laws $\nu,\pi$ on $[k]$, the $\chi^2$-divergence is
\[
\chi^2(\nu\|\pi):=\sum_{i=1}^k \frac{(\nu_i-\pi_i)^2}{\pi_i}
=\Bigl\|\frac{\nu}{\pi}-\mathbf{1}\Bigr\|_{L^2(\pi)}^2,
\quad \text{where } \frac{\nu}{\pi}(i):=\nu_i/\pi_i.
\]
It controls total variation (TV) via the inequality
\(
\|\nu-\pi\|_{\mathrm{TV}}\le \tfrac12\sqrt{\chi^2(\nu\|\pi)}.
\)

\paragraph{Pseudo-spectral gap.}
Let $P$ be a (possibly nonreversible) Markov kernel on $[k]$ with stationary distribution $\pi$, and let $P^*$ denote its time-reversal on $L^2(\pi)$:
$
P^*(x,dy)=\frac{\pi(dy)\,P(y,dx)}{\pi(dx)}.
$
Following \citet[Eq.~(3.3)]{paulin2015psg}, the \emph{pseudo-spectral gap} of $P$ is
\begin{equation}
\label{eq:psg-def}
\gamma_{\mathrm{ps}}
:=
\max_{k\ge 1}\, \frac{\gamma\bigl((P^*)^k P^k\bigr)}{k},
\end{equation}
where $\gamma(A)$ denotes the (absolute) spectral gap of the self-adjoint operator $A$ on $L^2(\pi)$.
For reversible chains, $\gamma_{\mathrm{ps}}$ coincides with the usual \(L^2\) spectral gap up to a constant factor, while for nonreversible chains it generalizes multiplicative reversibilization \citep[][Remark~3.2]{paulin2015psg}. Moreover, $\gamma_{\mathrm{ps}}$ relates to mixing time via \citet[Prop.~3.4]{paulin2015psg}.

\paragraph{Context trajectories and counts.}
We observe $n$ i.i.d.\ context trajectories (inference-time only).
For the homogeneous case (all steps use $P$), write $X^{(\ell)}_{0:T}$ for the $\ell$-th path and
\(
N_i^{(t)}:=\sum_{\ell=1}^n \mathbf{1}\{X^{(\ell)}_t=i\}
\)
for the number of visits to state $i$ at step $t$.
We also use the pooled count
\(
N_i:=\sum_{t=0}^{T-1} N_i^{(t)}.
\)
For direct inference and the heterogeneous case, we further write
\(
N_i^{(0)}:=\sum_{\ell=1}^n \mathbf{1}\{X^{(\ell)}_0=i\}.
\)

Now we introduce the main lemma used in the proof.
\begin{lem}[Multinomial top-1 identification (restatement of Lemma~\ref{lem:multinomial-top1})]
\label{lem:top1-app}
Let $p\in\Delta^{k-1}$ with margin $\Delta_p:=p_{(1)}-p_{(2)}>0$, and let $\hat p$ be the empirical frequencies from $N$ i.i.d.\ draws.
There exists an absolute constant $C>0$ such that if
\[
N \ge \frac{C}{\Delta_p^2}\,\log\!\frac{k}{\delta},
\]
then $\arg\max_j \hat p_j=\arg\max_j p_j$ with probability at least $1-\delta$.
\end{lem}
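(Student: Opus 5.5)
We need to prove Lemma (multinomial top-1 identification). The plan is to reduce the event $\arg\max_j \hat p_j\neq\arg\max_j p_j$ to a union of $k-1$ pairwise comparisons and control each one with a Hoeffding bound.

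Let $j^*=\arg\max_j p_j$, which is unique because $\Delta_p>0$. If the empirical argmax differs from $j^*$ (counting ties as failures), then there is some $j\neq j^*$ with $\hat p_j\ge \hat p_{j^*}$. So it suffices to bound
\[
\Pr\bigl(\exists j\neq j^*:\ \hat p_j-\hat p_{j^*}\ge 0\bigr)\le \sum_{j\neq j^*}\Pr\bigl(\hat p_j-\hat p_{j^*}\ge 0\bigr).
\]

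For a fixed $j\neq j^*$, write the $m$-th draw as $Y_m\in[k]$ and set $Z_m:=\mathbf{1}\{Y_m=j\}-\mathbf{1}\{Y_m=j^*\}\in\{-1,0,1\}$. These $Z_m$ are i.i.d. with mean $p_j-p_{j^*}\le -\Delta_p$, and $\hat p_j-\hat p_{j^*}=\frac1N\sum_{m=1}^N Z_m$. The failure event therefore requires the empirical mean of the $Z_m$ to exceed its expectation by at least $\Delta_p$. Hoeffding's inequality for variables in an interval of length $2$ \citep{Hoeffding1963} gives
\[
\Pr\Bigl(\tfrac1N\textstyle\sum_m Z_m-\mathbb{E}Z_1\ge \Delta_p\Bigr)\le \exp\bigl(-N\Delta_p^2/2\bigr).
\]

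Summing over the $k-1$ competitors, the failure probability is at most $(k-1)e^{-N\Delta_p^2/2}\le k\,e^{-N\Delta_p^2/2}$. This is at most $\delta$ as soon as $N\ge \frac{2}{\Delta_p^2}\log\frac{k}{\delta}$, so the lemma holds with $C=2$.

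The one step that needs care is the choice of the union bound. A naive route would bound the deviation of each coordinate $\hat p_j$ separately to accuracy $\Delta_p/2$, which also works but is looser. The differences $Z_m$ are the cleaner object: they are bounded and have mean gap $\Delta_p$ directly, which gives the stated $\Delta_p^{-2}\log(k/\delta)$ rate with an explicit constant.

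If one prefers to exploit the variance, Bernstein's inequality gives the sharper rate $(p_j+p_{j^*})/\Delta_p^2$, but that refinement is not needed for the statement.
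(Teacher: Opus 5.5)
Your proof is correct and is essentially the paper's argument: Hoeffding applied to the pairwise indicator differences $\mathbf{1}\{Y_m=j\}-\mathbf{1}\{Y_m=j^*\}$, followed by a union bound over the $k-1$ competitors. Your exponent $e^{-N\Delta_p^2/2}$ is the correct Hoeffding constant for $[-1,1]$-valued variables. The paper writes $e^{-2N\Delta_p^2}$, which is too optimistic by a factor of $4$ in the exponent, but this does not matter because the lemma only asserts some absolute constant $C$.
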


\begin{proof}
Let $j^\star=\arg\max_j p_j$ and for each $j\neq j^\star$ define $Z_\ell:=\mathbf{1}\{X_\ell=j^\star\}-\mathbf{1}\{X_\ell=j\}\in[-1,1]$ with $\mathbb{E}Z_\ell=p_{j^\star}-p_j\ge \Delta_p$.
Then $\hat p_{j^\star}-\hat p_j=\frac{1}{N}\sum_{\ell=1}^N Z_\ell$ and by Hoeffding
\(
\Pr\{\hat p_{j^\star}\le \hat p_j\}\le \exp(-2N\Delta_p^2).
\)
Union-bounding over $j\neq j^\star$ yields
\(
\Pr\{\exists j\neq j^\star:\hat p_j\ge \hat p_{j^\star}\}\le (k-1)\exp(-2N\Delta_p^2)\le \delta
\)
under the stated $N$ (for a suitable absolute $C$).
\end{proof}

\section{Proofs of the Main Theorems}
\label{app:proofs}

We restate each theorem and then give a full proof. All three proofs invoke Lemma~\ref{lem:top1-app} and the homogeneous case additionally uses a pooled-coverage lemma whose proof relies on two results from \citet{paulin2015psg}.

\subsection{Proof of Theorem~\ref{thm:direct} (Direct inference)}

\begin{thm}[Restatement of Theorem~\ref{thm:direct}]
\label{app:thm:direct}
Under Assumptions \ref{assum:init dist} and \ref{assum:Q margin}, if the number of samples $n$ satisfies
\[
n= \Theta\left(\frac{\log(k/\delta)}{\mu_{\min}\Delta_Q^2}\right),
\]
then with probability at least $1-\delta$, we have $\hat{j}^*(i)=\arg\max_j Q_{ij}$ for all $i$ with direct inference.
\end{thm}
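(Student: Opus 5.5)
The plan is to split the failure event into a coverage failure and an identification failure, and to spend $\delta/2$ on each. Conditional on the initial states $X^{(1)}_0,\dots,X^{(n)}_0$, the terminal states of the trajectories with $X_0^{(\ell)}=i$ are i.i.d.\ draws from the row $Q_i$, independently across $\ell$. This holds because the paths are i.i.d.\ and $x_T\mid x_0=i\sim Q_i$ by the definition of $Q$ in Eq.~\eqref{eq:P-Q}. The direct-inference estimator $\hat j^*(i)$ is exactly the empirical argmax of these $N_i^{(0)}$ draws. So the question reduces to $k$ separate multinomial top-1 problems with random sample sizes $N_i^{(0)}$. Each has margin at least $\Delta_Q$ by Assumption~\ref{assum:Q margin}.

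For the coverage step, note that $N_i^{(0)}\sim\mathrm{Bin}(n,\mu_i)$ with $\mu_i\ge\mu_{\min}$ by Assumption~\ref{assum:init dist}. The multiplicative Chernoff bound gives $\Pr\{N_i^{(0)}<\tfrac12 n\mu_i\}\le\exp(-n\mu_i/8)$. A union bound over $i\in[k]$ then yields $\min_i N_i^{(0)}\ge \tfrac12 n\mu_{\min}$ with probability at least $1-\delta/2$, provided $n\ge 8\log(2k/\delta)/\mu_{\min}$. Since $\Delta_Q\le 1$, this requirement is dominated by the target rate $\log(k/\delta)/(\mu_{\min}\Delta_Q^2)$.

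For the identification step, condition on the initial states and apply Lemma~\ref{lem:top1-app} to each row $Q_i$ with confidence parameter $\delta/(2k)$. The conditional failure probability for row $i$ is at most $(k-1)\exp(-2N_i^{(0)}\Delta_Q^2)$. On the coverage event this is at most $k\exp(-n\mu_{\min}\Delta_Q^2)$. Summing over $i$ and requiring the total to be at most $\delta/2$ gives $n\ge \log(2k^2/\delta)/(\mu_{\min}\Delta_Q^2)$. Because $\log(2k^2/\delta)\le 3\log(k/\delta)$ for $k\ge2$ and $\delta<1$, this has the stated order. Combining the two steps with a final union bound gives success for all $i$ with probability at least $1-\delta$.

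The main subtlety is that the per-row sample sizes are random. I plan to handle this by conditioning on $X_0^{(1:n)}$ before invoking Lemma~\ref{lem:top1-app}, which is valid because the Hoeffding argument only needs conditional independence given the counts. The second subtlety is the $\Theta$ in the statement. I will read it as ``$n$ at least a sufficiently large constant times the rate'', which is what the argument delivers. A matching lower bound is not needed for the high-probability guarantee, so I would only remark on it, for instance via a two-point instance on a rarely visited state $i$ with margin $\Delta_Q$.
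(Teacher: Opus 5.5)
Your proposal is correct and follows essentially the same route as the paper's proof. Both arguments first secure the coverage event $\min_i N_i^{(0)}\ge \tfrac12 n\mu_{\min}$ by Chernoff plus a union bound at cost $\delta/2$, then do row-wise top-1 identification via Lemma~\ref{lem:top1-app} on that event with a union bound over $i$ at cost $\delta/2$; your explicit conditioning on $X_0^{(1:n)}$ handles the random sample sizes more cleanly than the paper does. There is one harmless constant slip, inherited from the paper's proof of that lemma: for $Z_\ell\in[-1,1]$, Hoeffding gives $\exp(-N\Delta_p^2/2)$, not $\exp(-2N\Delta_p^2)$, so your explicit threshold should be $4\log(2k^2/\delta)/(\mu_{\min}\Delta_Q^2)$, which does not change the stated order.
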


\begin{proof}
Let $N=n$ and $N_i^{(0)}$ be the number of contexts with $x_0=i$.
Since $\mu_{\min}>0$ (Assumption~\ref{assum:init dist}), a Chernoff bound gives
\[
\Pr\!\big\{N_i^{(0)}\le \tfrac12 N\mu_i\big\}\le \exp(-N\mu_i/8)\le \exp(-N\mu_{\min}/8).
\]
Union-bounding over $i\in[k]$, we obtain that if
\(
N\ge C_0\,\mu_{\min}^{-1}\log(k/\delta)
\)
(for an absolute $C_0$), then with probability at least $1-\delta/2$,
\begin{equation}
\label{eq:dir-coverage}
\min_{i\in[k]} N_i^{(0)} \ge \tfrac12 N\mu_{\min}.
\end{equation}
Conditioned on $x_0=i$, the $N_i^{(0)}$ terminal labels $x_T$ are i.i.d.\ from the multinomial distribution $Q_i$.
By Assumption~\ref{assum:Q margin}, the row margin obeys $\mathrm{margin}(Q_i)\ge \Delta_Q>0$ for all $i$.
Applying Lemma~\ref{lem:top1-app} with $N\gets N_i^{(0)}$ and $\Delta_p\gets \Delta_Q$ yields
\[
\Pr\!\left\{\hat j^*(i)\neq\arg\max_j Q_{ij}\;\middle|\;N_i^{(0)}\right\}
\le k\cdot \exp\!\left(-C\,\Delta_Q^2\,N_i^{(0)}\right),
\]
for an absolute $C>0$.
On the event \eqref{eq:dir-coverage}, the RHS is at most $k\exp(-C\,\Delta_Q^2\,N\mu_{\min}/2)$.
Another union bound over $i\in[k]$ shows that all $k$ rows are identified correctly with probability at least $1-\delta/2$ provided
\(
N\gtrsim \frac{1}{\mu_{\min}\Delta_Q^2}\log(k/\delta).
\)
Combining with \eqref{eq:dir-coverage} via a union bound establishes the theorem with the stated $\Theta(\cdot)$ rate.
\end{proof}

\subsection{Proof of Theorem~\ref{thm:same} (Homogeneous CoT)}
\label{app:homog-tools-and-proofs}

We introduce two technical lemmas for Markov chain based on pseudo spectral gap.

\begin{lem}[TV decay via pseudo-spectral gap {\citep[Prop.~3.4]{paulin2015psg}}]
\label{lem:paulin-tv}
Let $P$ be a (possibly nonreversible) Markov kernel on $[k]$ with stationary distribution $\pi$.
Let $P^*$ denote its time-reversal on $L^2(\pi)$, and define the pseudo-spectral gap
\begin{equation}
\label{eq:psg-def-b2}
\gamma_{\mathrm{ps}}
:=
\max_{m\ge 1}\, \frac{\gamma\!\big((P^*)^{m} P^{m}\big)}{m},
\end{equation}
where $\gamma(A)$ is the (absolute) spectral gap of the self-adjoint operator $A$ on $L^2(\pi)$.
For any initial law $q$ absolutely continuous w.r.t.\ $\pi$, write
$N_q := \mathbb{E}_{\pi}\!\big[\big(\tfrac{dq}{d\pi}\big)^2\big]=1+\chi^2(q\|\pi)$.
Then, for all $n\ge 1$,
\begin{equation}
\label{eq:paulin-tv-b2}
d_{\mathrm{TV}}(q P^{n}, \pi)
\le
\frac{1}{2}\,
(1-\gamma_{\mathrm{ps}})^{(n-1/\gamma_{\mathrm{ps}})/2}\,\sqrt{N_q-1}.
\end{equation}
In particular, letting $\chi_0:=\sqrt{\chi^2(q\|\pi)}=\sqrt{N_q-1}$ and summing the geometric decay,
\begin{equation}
\label{eq:avg-tv-psg-b2}
\frac{1}{T}\sum_{t=0}^{T-1} d_{\mathrm{TV}}(q P^{t},\pi)
\le
\frac{C}{T}\cdot \frac{\chi_0}{\gamma_{\mathrm{ps}}},
\end{equation}
for a universal constant $C>0$.
\end{lem}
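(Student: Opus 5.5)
The TV bound \eqref{eq:paulin-tv-b2} is exactly Paulin's Prop.~3.4, so we take it as given. The work is to derive the averaged bound \eqref{eq:avg-tv-psg-b2} from it. Two reductions set this up. First, by the identity $N_q-1=\chi^2(q\|\pi)$, the prefactor in \eqref{eq:paulin-tv-b2} is exactly $\chi_0$. Second, the $t=0$ term is handled directly by the preliminary inequality $d_{\mathrm{TV}}(q,\pi)\le\tfrac12\chi_0$. This leaves
\[
\sum_{t=0}^{T-1} d_{\mathrm{TV}}(qP^t,\pi)\le \tfrac12\chi_0\Bigl(1+\sum_{t\ge1}\min\bigl\{1,(1-\gamma_{\mathrm{ps}})^{(t-1/\gamma_{\mathrm{ps}})/2}\bigr\}\Bigr),
\]
and we may trivially extend the sum to $t\to\infty$.

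We split the sum at $t_0:=\lceil 1/\gamma_{\mathrm{ps}}\rceil$.

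For the burn-in range $t\le t_0$, the exponent $(t-1/\gamma_{\mathrm{ps}})/2$ may be negative, so the factor $(1-\gamma_{\mathrm{ps}})^{\cdots}$ can exceed $1$. We instead bound each term by the monotonicity of TV along the chain, $d_{\mathrm{TV}}(qP^t,\pi)\le d_{\mathrm{TV}}(q,\pi)\le\tfrac12\chi_0$, which holds because $\pi P=\pi$ and $P$ is a contraction in TV. These $t_0\le 1+1/\gamma_{\mathrm{ps}}$ terms contribute at most $\tfrac12\chi_0(1+1/\gamma_{\mathrm{ps}})$.

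For the tail range $t>t_0$, write $s=t-1/\gamma_{\mathrm{ps}}\ge0$ and $\rho:=(1-\gamma_{\mathrm{ps}})^{1/2}\le e^{-\gamma_{\mathrm{ps}}/2}$. The terms are then dominated by a geometric series
\[
\sum_{s\ge0}\rho^{s}\le \frac{1}{1-e^{-\gamma_{\mathrm{ps}}/2}}\le \frac{C'}{\gamma_{\mathrm{ps}}},
\]
where the last step uses $1-e^{-x}\ge x/2$ for $x\in(0,1]$ together with $\gamma_{\mathrm{ps}}\le1$. Because the shift $s$ need not be an integer, we compare the sum with an integral (or shift by one index), which costs only a constant factor.

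Adding the two ranges gives $\sum_{t<T} d_{\mathrm{TV}}\le \chi_0\bigl(\tfrac12+\tfrac12(1+1/\gamma_{\mathrm{ps}})+C'/(2\gamma_{\mathrm{ps}})\bigr)\le C\chi_0/\gamma_{\mathrm{ps}}$, again using $\gamma_{\mathrm{ps}}\le 1$. Dividing by $T$ yields \eqref{eq:avg-tv-psg-b2}.

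The main obstacle is the burn-in window, where the exponent is negative and Paulin's bound is vacuous. The monotonicity argument above handles it and is also what keeps the constant universal. We also need $\gamma_{\mathrm{ps}}\le 1$, which follows because spectral gaps of $(P^*)^mP^m$ are at most $1$, so dividing by $m\ge1$ keeps the ratio at most $1$.
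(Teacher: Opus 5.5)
Your proposal is correct and follows the paper's route. The paper gives no argument beyond citing Paulin's Prop.~3.4 for the pointwise bound and obtaining \eqref{eq:avg-tv-psg-b2} by ``summing the geometric decay,'' and your treatment of the $t=0$ term via $d_{\mathrm{TV}}(q,\pi)\le\tfrac12\chi_0$, of the window $1\le t\le\lceil 1/\gamma_{\mathrm{ps}}\rceil$ via TV contraction, and the use of $\gamma_{\mathrm{ps}}\le 1$ just make explicit the details that this one-line summation leaves implicit.
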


\begin{lem}[Bernstein inequality for additive functionals {\citep[Thm.~3.11]{paulin2015psg}}]
\label{lem:paulin-bernstein-raw}
Let $(X_t)_{t\ge 0}$ be a (possibly nonreversible) Markov chain with stationary law $\pi$ and pseudo-spectral gap $\gamma_{\mathrm{ps}}>0$ as in \eqref{eq:psg-def-b2}, and assume $X_0\sim\pi$ (stationary start).
Let $f:[k]\to\mathbb{R}$ be bounded with $\|f-\mathbb{E}_\pi f\|_\infty \le b$ and variance $\mathrm{Var}_\pi(f)=\sigma^2$.
Then there exist absolute constants $a_1,a_2>0$ such that for all $u>0$ and $T\in\mathbb{N}$,
\begin{equation}
\label{eq:paulin-bernstein-raw}
\Pr\!\Bigg\{\Bigg|\sum_{t=0}^{T-1}\big(f(X_t)-\mathbb{E}_\pi f\big)\Bigg| \ge u\Bigg\}
\le
2\exp\!\left(
-\,
\frac{a_1\,\gamma_{\mathrm{ps}}\,u^2}{\,T\,\sigma^2+a_2\,b\,u\,}
\right).
\end{equation}
\end{lem}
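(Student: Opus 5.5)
The plan is the spectral (transfer-operator) method for Markov chain Bernstein inequalities in the style of Lezaud, adapted to nonreversible chains by working with the multiplicative reversibilization of a power of $P$. Write $\bar f:=f-\mathbb{E}_\pi f$ and $S_T:=\sum_{t=0}^{T-1}\bar f(X_t)$. By symmetry (apply the argument to $-f$) it suffices to bound the upper tail. By the Chernoff method it suffices to show that for $0\le\theta\le c\,\gamma_{\mathrm{ps}}/b$,
\[
\log \mathbb{E}_\pi e^{\theta S_T}\;\lesssim\;\frac{T\sigma^2\theta^2}{\gamma_{\mathrm{ps}}\,(1-c'\theta b/\gamma_{\mathrm{ps}})} .
\]
Optimizing $\theta$ in $e^{-\theta u}\mathbb{E}_\pi e^{\theta S_T}$ then gives the stated form with absolute $a_1,a_2$.

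\textbf{Step 1 (reduction to a single $m$-step chain).} Fix the maximizer $m$ in \eqref{eq:psg-def-b2}, so that $\gamma\big((P^*)^mP^m\big)=m\gamma_{\mathrm{ps}}$. Split the time indices by residue mod $m$: $S_T=\sum_{r=0}^{m-1}S^{(r)}$, where $S^{(r)}=\sum_{j}\bar f(X_{r+jm})$ has about $T/m$ terms. Under a stationary start, each subsequence $(X_{r+jm})_j$ is a stationary Markov chain with kernel $P^m$. By convexity of the exponential (Jensen/AM--GM),
\[
\mathbb{E}\,e^{\theta S_T}\le \frac1m\sum_r \mathbb{E}\,e^{m\theta S^{(r)}} .
\]
So it suffices to bound the moment generating function of a chain with kernel $K:=P^m$ whose multiplicative reversibilization $K^*K$ has absolute spectral gap $\ge m\gamma_{\mathrm{ps}}$. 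Boundary effects, when $m$ does not divide $T$ or $m>T$, are absorbed into constants.

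\textbf{Step 2 (operator bound for one chain).} Let $E_\theta$ denote multiplication by $e^{\theta\bar f}$ on $L^2(\pi)$. Then
\[
\mathbb{E}_\pi e^{\theta\sum_{j<n}\bar f(Y_j)}=\langle \mathbf 1,(E_\theta K)^{n-1}E_\theta\mathbf 1\rangle_\pi\le \|E_{\theta/2}\|\cdot\|K E_\theta\|^{\,n-1}\cdots .
\]
The key observation is $\|KE_\theta\|^2=\|E_\theta K^*KE_\theta\|$, and $K^*K$ is self-adjoint with gap $\ge m\gamma_{\mathrm{ps}}$. I would then run Lezaud's perturbation argument on the self-adjoint operator $E_\theta K^*K E_\theta$. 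Expand its top eigenvalue as a series in $\theta$: the first-order term vanishes since $\mathbb{E}_\pi\bar f=0$, the second-order term is $\lesssim\theta^2\sigma^2/\mathrm{gap}$, and the higher-order terms are controlled by the geometric series $\sum_q(\theta b/\mathrm{gap})^{q}$. This gives
\[
\log\|KE_\theta\|\lesssim \frac{\theta^2\sigma^2}{m\gamma_{\mathrm{ps}}\big(1-c\theta b/(m\gamma_{\mathrm{ps}})\big)} .
\]

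\textbf{Step 3 (assemble).} Apply Step 2 with $\theta\to m\theta$ and $n\approx T/m$ steps. The exponent becomes
\[
\frac{T}{m}\cdot\frac{m^2\theta^2\sigma^2}{m\gamma_{\mathrm{ps}}\,(1-c\theta b/\gamma_{\mathrm{ps}})}=\frac{T\theta^2\sigma^2}{\gamma_{\mathrm{ps}}\,(1-c\theta b/\gamma_{\mathrm{ps}})},
\]
which is uniform in $r$. Combining with Step 1 and the Chernoff optimization yields \eqref{eq:paulin-bernstein-raw}.

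The main obstacle is Step 2, the perturbation bound for the top eigenvalue of the tilted operator. One must control the norm of the resolvent on $\mathbf 1^\perp$ and keep track of the fact that $E_\theta$ sandwiches $K^*K$ rather than being a symmetric half-tilt. This is where the condition $\theta b\lesssim\gamma_{\mathrm{ps}}$, and hence the $a_2bu$ term in the denominator, arises. If the direct expansion becomes unwieldy, I would instead cite the reversible-case Bernstein bound for $K^*K$ and transfer it via $\|KE_\theta\|^2=\|E_\theta K^*KE_\theta\|$.
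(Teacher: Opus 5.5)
Your route is the standard spectral argument behind the result the paper invokes. The paper gives no proof of its own and only cites Paulin's Theorem~3.11, and your plan is essentially how that theorem is proved: split time into residue classes modulo the maximizer $m$ in the definition of $\gamma_{\mathrm{ps}}$, pass to the stationary chain with kernel $K=P^m$, use $\|KE_\theta\|^2=\|E_\theta K^*KE_\theta\|$ where $K^*K=(P^*)^mP^m$ has gap $m\gamma_{\mathrm{ps}}$, and apply Lezaud's eigenvalue perturbation bound. The ``obstacle'' you flag in Step~2 is not a real one. $E_\theta K^*KE_\theta$ is exactly the symmetric tilt of the self-adjoint Markov operator $K^*K$ by the function $2\theta\bar f$, so Lezaud's bound applies verbatim. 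It gives $\log\|KE_{\theta'}\|\lesssim (\theta')^2\sigma^2\big/\bigl(m\gamma_{\mathrm{ps}}(1-c\,\theta' b/(m\gamma_{\mathrm{ps}}))\bigr)$.

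The step that needs more care is the boundary bookkeeping, and it matters because the lemma is stated with $T\sigma^2$. As you sketch it (AM--GM over classes, a bound uniform in $r$ with $\lceil T/m\rceil$ steps, plus the prefactor $\|E_{m\theta}\mathbf 1\|_\pi$), you get $(T+m)\sigma^2\le (T+1/\gamma_{\mathrm{ps}})\sigma^2$ in the exponent. Here $m\le 1/\gamma_{\mathrm{ps}}$ because $\gamma((P^*)^mP^m)\le 1$. That is the form in which the cited Theorem~3.11 is actually stated. It cannot be ``absorbed into constants'' when $T<m$: then $n\approx T/m$ fails, each nonempty class has one term, and the uniform bound is of order $m\theta^2\sigma^2/\gamma_{\mathrm{ps}}$ rather than $T\theta^2\sigma^2/\gamma_{\mathrm{ps}}$. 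So the paper's restatement, which drops the $1/\gamma_{\mathrm{ps}}$, is not literally what the citation provides.

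Your framework does prove the stronger form with two small changes. First, use H\"older in product form, $\log\mathbb{E}e^{\theta S_T}\le \frac1m\sum_r\log\mathbb{E}e^{m\theta S^{(r)}}$. Second, use $\pi K=\pi$ to write $\mathbb{E}_\pi e^{\theta'\sum_{j<n_r}\bar f(Y_j)}=\langle\mathbf 1,(KE_{\theta'})^{n_r}\mathbf 1\rangle_\pi\le\|KE_{\theta'}\|^{n_r}$, which removes the prefactor entirely. Since $\sum_r n_r=T$ exactly,
\[
\log\mathbb{E}_\pi e^{\theta S_T}\le\frac{T}{m}\log\|KE_{m\theta}\|\lesssim\frac{T\theta^2\sigma^2}{\gamma_{\mathrm{ps}}\bigl(1-c\,\theta b/\gamma_{\mathrm{ps}}\bigr)},
\]
with no boundary term. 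Alternatively, for $T\le m$ you can use Jensen, $\mathbb{E}e^{\theta S_T}\le\mathbb{E}_\pi e^{T\theta\bar f}$, together with the scalar Bernstein MGF bound and $T\le 1/\gamma_{\mathrm{ps}}$. With either fix, your Chernoff optimization yields the lemma exactly as stated.
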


With the lemmas above, we can prove the following important lemma that gives a lower bound on the total coverage along the Markov chain.
\begin{lem}[Aggregated step coverage under mixing (restatement of Lemma~\ref{lem:min-count-markov})]
\label{lem:cover-b2}
Assume $P^{(1)}=\cdots=P^{(T)}=:P$ with stationary law $\pi$ ($\pi_{\min}>0$), pseudo-spectral gap $\gamma_{\mathrm{ps}}>0$, and $\chi_0=\sqrt{\chi^2(\mu\|\pi)}$.
Let $N_i^{(t)}$ be the number of visits to state $i$ at step $t$ across $n$ i.i.d.\ trajectories, and $N_i:=\sum_{t=0}^{T-1} N_i^{(t)}$.
There exist absolute $c_1,c_2>0$ such that, for any $\delta\in(0,1)$, with probability at least $1-\delta$,
\[
\min_{i\in[k]} N_i
\ge
nT\,\pi_{\min}
-
\frac{n\,\chi_0}{\gamma_{\mathrm{ps}}}
-
\sqrt{\frac{c_1\,nT}{\gamma_{\mathrm{ps}}}\,\log\!\frac{k}{\delta}}
- c_2.
\]
\end{lem}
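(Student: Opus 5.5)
Fix a state $i$. The plan is to write $N_i=\sum_{\ell=1}^n S^{(\ell)}_i$ with $S^{(\ell)}_i:=\sum_{t=0}^{T-1}\mathbf 1\{X^{(\ell)}_t=i\}$, and to split $N_i-nT\pi_i$ into a deterministic bias term and a centered fluctuation:
\[
N_i-nT\pi_i=\underbrace{\bigl(\mathbb{E}N_i-nT\pi_i\bigr)}_{\text{bias}}+\underbrace{\bigl(N_i-\mathbb{E}N_i\bigr)}_{\text{fluctuation}}.
\]
Once each piece is bounded, a union bound over $i\in[k]$ and the bound $\pi_i\ge\pi_{\min}$ give the claim.

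\emph{Bias.} Since $\mathbb{E}N_i=n\sum_{t=0}^{T-1}(\mu P^t)_i$, we have
\[
|\mathbb{E}N_i-nT\pi_i|\le 2n\sum_{t<T}d_{\mathrm{TV}}(\mu P^t,\pi).
\]
By \eqref{eq:avg-tv-psg-b2} of Lemma~\ref{lem:paulin-tv} with $q=\mu$, this is at most $C n\chi_0/\gamma_{\mathrm{ps}}$. This is the $n\chi_0/\gamma_{\mathrm{ps}}$ term, with the absolute constant absorbed (or made explicit).

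\emph{Fluctuation.} The trajectories are i.i.d., so $N_i-\mathbb{E}N_i$ is a sum of $n$ independent centered variables with values in $[-T,T]$. To obtain the $\sqrt{nT/\gamma_{\mathrm{ps}}}$ scale rather than the naive $\sqrt{n}\,T$ from Hoeffding, I need the per-trajectory variance $\mathrm{Var}(S^{(\ell)}_i)\lesssim T/\gamma_{\mathrm{ps}}$, together with a sub-exponential or Bernstein tail for each $S^{(\ell)}_i$.

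Lemma~\ref{lem:paulin-bernstein-raw}, applied with $f=\mathbf 1\{\cdot=i\}$ (so $b\le1$ and $\sigma^2\le\pi_i$), gives exactly such a Bernstein tail with variance proxy $T/\gamma_{\mathrm{ps}}$, but only under a stationary start. To transfer it to the start $\mu$ I will use the standard change-of-measure step: for any event $A$ of the path, Cauchy--Schwarz gives
\[
\Pr_\mu(A)\le\sqrt{N_\mu}\,\sqrt{\Pr_\pi(A)}=\sqrt{1+\chi_0^2}\,\sqrt{\Pr_\pi(A)}.
\]
This keeps a sub-exponential tail with parameters $(T/\gamma_{\mathrm{ps}},\,1/\gamma_{\mathrm{ps}})$, at the price of a $\chi_0$-dependent prefactor. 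The same inequality also bounds the moments, giving $\mathrm{Var}_\mu(S^{(\ell)}_i)\lesssim\sqrt{1+\chi_0^2}\,T/\gamma_{\mathrm{ps}}$.

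\emph{Aggregation.} Next I apply Bernstein's inequality for independent sub-exponential sums over $\ell=1,\dots,n$ and take a union bound over $i$. With probability at least $1-\delta$, for all $i$,
\[
N_i-\mathbb{E}N_i\ge-\sqrt{\frac{c_1 nT}{\gamma_{\mathrm{ps}}}\log\frac k\delta}-\frac{c'}{\gamma_{\mathrm{ps}}}\log\frac k\delta .
\]
The linear (sub-exponential) term is lower order in the regime of Theorem~\ref{thm:same}. I would absorb it into the square-root term when $nT\gtrsim\gamma_{\mathrm{ps}}^{-1}\log(k/\delta)$, and otherwise into the additive constant $c_2$; any residual $\chi_0$-dependent factor is absorbed into $c_1$ as well, or assumed bounded. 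This step is a bit loose, since $c_2$ is claimed to be absolute, and I may need to state the lemma under that mild regime condition. Combining the bias and fluctuation bounds with $\pi_i\ge\pi_{\min}$ yields the stated inequality.

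\emph{Main obstacle.} The hard step is the non-stationary start. Lemma~\ref{lem:paulin-bernstein-raw} requires $X_0\sim\pi$, and the change of measure multiplies the tail by $\sqrt{1+\chi_0^2}$ while also square-rooting the stationary probability. That square root halves the exponent, which is harmless up to constants, but the prefactor must be pushed into $\log(k/\delta)$ or into $c_1$.

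If this turns out to be too lossy, the fallback is a burn-in argument. Discard the first $t_0\asymp\gamma_{\mathrm{ps}}^{-1}\log(1+\chi_0)$ steps, which costs at most $nt_0$ counts and is of the same order as the bias term $n\chi_0/\gamma_{\mathrm{ps}}$. Then couple the remaining segment to a stationary start using the TV bound \eqref{eq:paulin-tv-b2}, and apply Lemma~\ref{lem:paulin-bernstein-raw} directly to that segment.
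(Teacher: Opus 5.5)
Your skeleton is the same as the paper's proof:
\begin{itemize}
\item bias from the averaged TV bound \eqref{eq:avg-tv-psg-b2};
\item a per-trajectory Bernstein tail from Lemma~\ref{lem:paulin-bernstein-raw};
\item Bernstein for independent sub-exponential sums over the $n$ trajectories;
\item a union bound over $i$.
\end{itemize}
At the step you call the main obstacle, you are more careful than the paper. The paper applies Lemma~\ref{lem:paulin-bernstein-raw} directly to the $\mu$-started trajectory, although that lemma assumes $X_0\sim\pi$. It then corrects only the centering by a deterministic shift, which ignores the change in the law of the whole path. Your Cauchy--Schwarz change of measure $\Pr_\mu(A)\le\sqrt{1+\chi_0^2}\,\sqrt{\Pr_\pi(A)}$ is the justification that step actually needs. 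The paper also absorbs $\gamma_{\mathrm{ps}}^{-1}\log(k/\delta)$ into an ``absolute'' $c_2$. So both weaknesses you flag are real, and the paper leaves both unaddressed.

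\textbf{Remaining gap: $c_1$ is not absolute.} Your primary route leaves a $\chi_0$-dependent $c_1$. The prefactor $\sqrt{1+\chi_0^2}$ inflates the leading fluctuation term $\sqrt{nT\gamma_{\mathrm{ps}}^{-1}\log(k/\delta)}$ by a factor that is unbounded in $\chi_0$. Even after folding the prefactor into the exponent, the factor is of order $\sqrt{1+\log(1+\chi_0)}$. Since this term grows like $\sqrt{T}$, it cannot be traded against $n\chi_0/\gamma_{\mathrm{ps}}$.

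\textbf{Your fallback does not fix it as described.} At $t_0\asymp\gamma_{\mathrm{ps}}^{-1}\log(1+\chi_0)$, the distance $d_{\mathrm{TV}}(\mu P^{t_0},\pi)$ is only bounded by a constant. A TV coupling therefore fails on each trajectory with non-negligible probability. Making those failures harmless forces an extra $\log(n/\delta)$ (or $\log T$) into $t_0$, which gives a burn-in cost not of the stated form.

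\textbf{The fix combines your two ideas.}
\begin{itemize}
\item The $L^2(\pi)$ contraction that proves Lemma~\ref{lem:paulin-tv} gives $\sqrt{\chi^2(\mu P^t\|\pi)}\le(1-\gamma_{\mathrm{ps}})^{(t-1/\gamma_{\mathrm{ps}})/2}\chi_0$.
\item If $\chi_0\le 1$, no burn-in is needed. If $\chi_0>1$, take $t_0=\lceil(1+2\log\chi_0)/\gamma_{\mathrm{ps}}\rceil$. Using $\gamma_{\mathrm{ps}}\le1$ and $\log x\le x-1$, this gives $t_0\le 2\chi_0/\gamma_{\mathrm{ps}}$, and it guarantees $\chi^2(\mu P^{t_0}\|\pi)\le 1$.
\item Dropping the steps $t<t_0$ costs at most $nt_0\le 2n\chi_0/\gamma_{\mathrm{ps}}$ visits, the same order as the bias term.
\item The bias bound still covers the remaining steps. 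The recentering shift there is $O(1/\gamma_{\mathrm{ps}})$, which is the sub-exponential scale.
\item On the remaining segment, your change of measure costs only a factor $\sqrt2$ and halves the exponent. Every constant is then absolute.
\end{itemize}

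\textbf{The linear term needs no regime condition.} If $nT<\gamma_{\mathrm{ps}}^{-1}\log(k/\delta)$, then for $c_1\ge1$ we have $\sqrt{c_1nT\gamma_{\mathrm{ps}}^{-1}\log(k/\delta)}>nT\ge nT\pi_{\min}$. The right-hand side is then negative and the claim is trivial. Otherwise $\gamma_{\mathrm{ps}}^{-1}\log(k/\delta)\le\sqrt{nT\gamma_{\mathrm{ps}}^{-1}\log(k/\delta)}$, so the term is absorbed into $c_1$ and $c_2$ is not needed at all.

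\textbf{Coefficient of the bias term.} In your argument, in the paper's, and in this fix, the $n\chi_0/\gamma_{\mathrm{ps}}$ term carries an absolute constant, not the coefficient $1$ written in the statement.
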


\begin{proof}[Proof of Lemma~\ref{lem:cover-b2}]
Fix $i\in[k]$ and set $f_i(x)=\mathbf{1}\{x=i\}\in[0,1]$.
For one trajectory $X_{0:T-1}$, write
\(
Y_i:=\sum_{t=0}^{T-1} f_i(X_t).
\)

\emph{(Bias.)} By Lemma~\ref{lem:paulin-tv}, in particular \eqref{eq:avg-tv-psg-b2},
\[
\mathbb{E}Y_i
=\sum_{t=0}^{T-1} (\mu P^t)_i
\;\ge\;
T\pi_i - \frac{C\,\chi_0}{\gamma_{\mathrm{ps}}}.
\]
\emph{(Fluctuation.)} Consider the centered sum around $\mathbb{E}_\pi f_i$:
\[
S_T^\circ := \sum_{t=0}^{T-1} \big(f_i(X_t)-\mathbb{E}_\pi f_i\big).
\]
Here $\sigma^2=\mathrm{Var}_\pi(f_i)\le \tfrac14$ and $b=\|f_i-\mathbb{E}_\pi f_i\|_\infty\le 1$.
Applying Lemma~\ref{lem:paulin-bernstein-raw} with these $(\sigma^2,b)$ yields, for absolute $c_1'>0$,
\begin{equation}
\label{eq:bernstein-simplified}
\Pr\{|S_T^\circ|\ge u\}
\le
2\exp\!\left(
-\,c_1'\,\frac{\gamma_{\mathrm{ps}}\,u^2}{T+u}
\right).
\end{equation}
The centered sum we actually need is
\(
S_T:=\sum_{t=0}^{T-1}\big(f_i(X_t)-\mathbb{E}f_i(X_t)\big)
= S_T^\circ - \sum_{t=0}^{T-1}\big(\mathbb{E}f_i(X_t)-\mathbb{E}_\pi f_i\big).
\)
The deterministic bias term is bounded by total variation:
\[
\big|\sum_{t=0}^{T-1}(\mathbb{E}f_i(X_t)-\mathbb{E}_\pi f_i)\big|
\le \sum_{t=0}^{T-1} \|\mathcal{L}(X_t)-\pi\|_{\mathrm{TV}}
\le C\,\chi_0/\gamma_{\mathrm{ps}}
\]
by Lemma~\ref{lem:paulin-tv}.
Thus $S_T$ inherits the tail in \eqref{eq:bernstein-simplified} up to a shift by a deterministic constant. Absorbing that shift into constants, we can write
\begin{equation}
\label{eq:bernstein-final}
\Pr\{|S_T|\ge u\}
\le
2\exp\!\left(
-\,c\,\frac{\gamma_{\mathrm{ps}}\,u^2}{T+u}
\right)
\qquad (c>0\ \text{absolute}).
\end{equation}

\emph{(From one trajectory to $n$ trajectories.)}
Let $Y_i^{(1)},\dots,Y_i^{(n)}$ be i.i.d.\ copies, and $Z_\ell:=Y_i^{(\ell)}-\mathbb{E}Y_i$.
By \eqref{eq:bernstein-final}, each $Z_\ell$ is sub-exponential with variance proxy $O(T/\gamma_{\mathrm{ps}})$ and scale proxy $O(1/\gamma_{\mathrm{ps}})$.
A Bernstein inequality for independent sub-exponential sums gives absolute $C_1,C_2>0$ such that, with probability at least $1-\delta/k$,
\[
\sum_{\ell=1}^{n} Y_i^{(\ell)}
\;\ge\;
n\,\mathbb{E}Y_i
\;-\;
\sqrt{C_1\,\frac{nT}{\gamma_{\mathrm{ps}}}\,\log\!\frac{k}{\delta}}
\;-\;
C_2\,\frac{1}{\gamma_{\mathrm{ps}}}\,\log\!\frac{k}{\delta}.
\]
A union bound over $i\in[k]$ and the lower bound on $\mathbb{E}Y_i$ complete the proof (absorbing the linear-in-log term into $c_2$).
\end{proof}

\begin{thm}[Restatement of Theorem~\ref{thm:same}]
\label{thm:theory-homog-b2}
Under Assumptions \ref{assum:init dist}, \ref{assum:consistent max} and \ref{assum:transition}, if the number of samples $n$ satisfies
\[
n=\Theta\left(\left(\frac{1}{T\pi_{\min}\Delta_P^2\,r}+\frac{1}{T\pi_{\min}^2\,r^2}\right)\log\!\frac{k}{\delta}\right),
\qquad
r:=1-\frac{\chi_0}{T\pi_{\min}\gamma_{\mathrm{ps}}},
\]
then with probability at least $1-\delta$, we have $\hat{j}^*(i)=\arg\max_{j}Q_{ij}$ for all $i$ with CoT inference under homogeneous transitions.
\end{thm}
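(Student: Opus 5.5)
The argument combines the pooled-coverage bound of Lemma~\ref{lem:cover-b2} with the top-1 identification Lemma~\ref{lem:top1-app}, applied row by row to the single kernel $P$. Assumption~\ref{assum:consistent max} then lifts local correctness to global correctness. If every row argmax $I(i)=\arg\max_j P_{ij}$ is estimated correctly, the composed prediction $\hat I\circ\cdots\circ\hat I(x_0)$ equals $I_T\circ\cdots\circ I_1(x_0)=\arg\max_j Q_{x_0,j}$ for every $x_0$. So it suffices to identify all $k$ row argmaxes of $P$ simultaneously with probability at least $1-\delta$.

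\textbf{Step 1 (coverage).} Invoke Lemma~\ref{lem:cover-b2} with confidence $\delta/2$. On that event,
\[
\min_i N_i \ge nT\pi_{\min}\,r - \sqrt{\tfrac{c_1 nT}{\gamma_{\mathrm{ps}}}\log\tfrac{2k}{\delta}} - c_2 ,
\]
where we used $nT\pi_{\min}-n\chi_0/\gamma_{\mathrm{ps}} = nT\pi_{\min} r$. We need the square-root term to be at most $\tfrac12 nT\pi_{\min} r$. This holds once $n \gtrsim \log(k/\delta)/(T\pi_{\min}^2 r^2)$, up to the $1/\gamma_{\mathrm{ps}}$ factor that the paper's $\Theta$ absorbs into constants. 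This requirement produces the second term of the rate. We then get $\min_i N_i \ge \tfrac14 nT\pi_{\min} r$, with the constant $c_2$ absorbed for $n$ large.

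\textbf{Step 2 (per-row identification).} Condition on the visit pattern. The successor states following the $N_i$ visits to $i$ (steps $t=0,\dots,T-1$) are, by the Markov property, i.i.d.\ draws from $P_i$ given the visit times. This is the delicate point, and I would justify it via a standard stack-of-cards / strong Markov construction: pre-draw an i.i.d.\ sequence of $P_i$-samples for each state and consume them in order. The empirical argmax after any number $m$ of visits then coincides with that of the first $m$ pre-drawn samples. Lemma~\ref{lem:top1-app} is therefore applicable with $N\gets \tfrac14 nT\pi_{\min}r$ and margin $\Delta_P$. A bound uniform over $m\ge m_0$ costs only constants, since the tail probabilities are geometric in $m$. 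A union bound over the $k$ rows with confidence $\delta/2$ requires $nT\pi_{\min}r\gtrsim \Delta_P^{-2}\log(k/\delta)$, giving the first term of the rate.

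\textbf{Step 3 (assembly).} Combine the two events by a union bound and take $n$ at least the sum of the two requirements. We also implicitly need $r>0$, i.e.\ $T\pi_{\min}\gamma_{\mathrm{ps}}>\chi_0$, which should be stated as part of the hypotheses.

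The main obstacle is Step~2. The counts $N_i$ are random and correlated with the trajectory, so the i.i.d.\ premise of Lemma~\ref{lem:top1-app} does not hold verbatim. The stack-of-cards coupling is what I would use first to make this rigorous. A secondary nuisance is tracking the $\gamma_{\mathrm{ps}}$ factors in Step~1, which the stated rate suppresses.
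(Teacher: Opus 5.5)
Your proposal is correct and follows the paper's own proof: pooled coverage from Lemma~\ref{lem:cover-b2}, row-wise top-1 identification via Lemma~\ref{lem:top1-app} with a union bound over the $k$ rows, and Assumption~\ref{assum:consistent max} to pass from the local argmaxes to the global argmax. Your stack-of-cards coupling for the random visit counts, the explicit requirement $r>0$, and the flagged $1/\gamma_{\mathrm{ps}}$ factor in the second term add rigor that the paper's three-line proof omits. One small fix: summing the fixed-$m$ Hoeffding tails over $m\ge m_0$ is a geometric series with ratio $e^{-c\Delta_P^2}$, which is close to $1$, so it costs an extra $\log(1/\Delta_P)$ rather than a constant. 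To make uniformity in $m$ genuinely free, use a maximal inequality instead. For example, apply Ville's inequality to the supermartingale $e^{-\lambda S_m}$ with $\lambda\asymp\Delta_P$, where $S_m$ is the running count difference between the true top class and a competitor.
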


\begin{proof}
By Lemma~\ref{lem:cover-b2}, with probability $\ge 1-\delta/2$,
\(
\min_i N_i \ge
nT\,\pi_{\min}
-
\frac{n\,\chi_0}{\gamma_{\mathrm{ps}}}
-
\sqrt{\frac{c_1\,nT}{\gamma_{\mathrm{ps}}}\log\!\frac{k}{\delta}} - c_2.
\)
Assumption~\ref{assum:transition} ensures $\mathrm{margin}(P)\ge \Delta_P>0$.
Applying Lemma~\ref{lem:top1-app} row-wise with a union bound gives the stated sample-size condition (first term from pooled $T$ transitions per trajectory; second from $\sqrt{nT/\gamma_{\mathrm{ps}}}$ fluctuations).
Assumption~\ref{assum:consistent max} then maps local argmaxes to the global argmax of $Q$.
\end{proof}

\subsection{Proof of Theorem~\ref{thm:diff} (Heterogeneous CoT)}

\begin{thm}[Restatement of Theorem~\ref{thm:diff}]
\label{app:thm:hetero}
Under Assumptions \ref{assum:init dist}, \ref{assum:consistent max} and \ref{assum:diff}, if the number of samples $n$ satisfies
\[
n=\Theta\left(\frac{\log(Tk/\delta)}{q_{\min}\Delta^2}\right),
\]
then with probability at least $1-\delta$, we have $\hat{j}^*(i)=\arg\max_{j}Q_{ij}$ for all $i$ with CoT inference with heterogeneous transitions.
\end{thm}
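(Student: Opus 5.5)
The plan follows the direct-inference proof (Theorem~\ref{app:thm:direct}), repeated at every step $t\in\{0,\dots,T-1\}$, and closes with a union bound over all steps and rows.

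\textbf{Coverage at each step.} Fix $t$. Across the $n$ i.i.d.\ trajectories, $X^{(\ell)}_t$ are i.i.d.\ with law $\mu^{(t)}=\mu P^{(1)}\cdots P^{(t)}$. So $N_i^{(t)}\sim\mathrm{Bin}(n,\mu^{(t)}_i)$, and a multiplicative Chernoff bound gives
\[
\Pr\{N_i^{(t)}\le \tfrac12 n\mu^{(t)}_i\}\le \exp(-n q_{\min}/8),
\]
using Assumption~\ref{assum:diff}. A union bound over $i\in[k]$ and $t<T$ shows that if $n\ge C_0 q_{\min}^{-1}\log(2Tk/\delta)$, then with probability at least $1-\delta/2$ we have $\min_{t,i}N_i^{(t)}\ge \tfrac12 n q_{\min}$.

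\textbf{Local top-1 identification.} Next, condition on the sequence of states at step $t$. By the Markov property, the successors $X^{(\ell)}_{t+1}$ of the visits with $X^{(\ell)}_t=i$ are i.i.d.\ from the row $P^{(t+1)}_i$. This row has margin at least $\Delta$ by Assumption~\ref{assum:diff} and a unique argmax $I_{t+1}(i)$ by Assumption~\ref{assum:consistent max}. Lemma~\ref{lem:top1-app} bounds the conditional failure probability by $k\exp(-c\Delta^2 N_i^{(t)})$. On the coverage event this is at most $k\exp(-c\Delta^2 n q_{\min}/2)$. A union bound over the $Tk$ pairs $(t,i)$ makes all local empirical argmaxes correct with probability at least $1-\delta/2$, provided $n\gtrsim \log(Tk/\delta)/(q_{\min}\Delta^2)$. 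This requirement dominates the coverage requirement because $\Delta\le 1$.

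\textbf{Composition and the main obstacle.} On the intersection of the two events, which has probability at least $1-\delta$, the composed CoT predictor equals $I_T\circ\cdots\circ I_1(i)$ for each start $i$. By Assumption~\ref{assum:consistent max} this is $\arg\max_j Q_{ij}$. The only delicate point is conditioning: the counts $N_i^{(t)}$ are random and depend on earlier steps. This is handled by conditioning on the $\sigma$-field of $(X^{(\ell)}_{0:t})_\ell$. Given that $\sigma$-field, the next-step draws from state $i$ are i.i.d.\ from $P^{(t+1)}_i$, so Lemma~\ref{lem:top1-app} applies with a random but measurable sample size. The conditional bound can then be integrated over the coverage event.

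Only the upper bound (sufficiency) is needed for the statement. The $\Theta$ should be read as "a sample size of this order suffices," matching the treatment in Theorem~\ref{app:thm:direct}.
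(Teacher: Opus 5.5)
Your proposal is correct and follows the paper's proof essentially step for step: a Chernoff bound gives per-step coverage, with a union bound over the $Tk$ pairs $(t,i)$; Lemma~\ref{lem:top1-app} is then applied rowwise with margin $\Delta$, followed by a second union bound; and the local argmaxes are composed via Assumption~\ref{assum:consistent max}. You are, if anything, slightly more careful than the paper in two places: you condition explicitly on the $\sigma$-field of $(X^{(\ell)}_{0:t})_\ell$ to handle the random sample sizes, and you index the successor law as $P^{(t+1)}_i$ where the paper writes $P^{(t)}_i$.
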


\begin{proof}
At step $t\in\{0,\dots,T-1\}$ the state distribution is $\mu^{(t)}=\mu P^{(1)}\cdots P^{(t)}$.
Let $N_i^{(t)}$ be the number of visits to $i$ at step $t$ across the $n$ trajectories. Then $N_i^{(t)}\sim \mathrm{Bin}(n,\mu_i^{(t)})$ independently across trajectories.
By Chernoff and Assumption~\ref{assum:diff},
\(
\Pr\{N_i^{(t)}\le \tfrac12 n \mu_i^{(t)}\}\le \exp(-n \mu_i^{(t)}/8)\le \exp(-n q_{\min}/8).
\)
Union-bounding over the $Tk$ pairs $(t,i)$ shows that, if $n\ge C_0\,q_{\min}^{-1}\log(Tk/\delta)$, then with probability at least $1-\delta/2$,
\begin{equation}
\label{eq:het-coverage}
\min_{t\in\{0,\dots,T-1\}}\;\min_{i\in[k]} N_i^{(t)} \;\ge\; \tfrac12 n\,q_{\min}.
\end{equation}
Conditioned on $x_t=i$, the next state $x_{t+1}$ is multinomial with row $P^{(t)}_i$.
Assumption~\ref{assum:diff} provides the \emph{uniform} local margin
\(
\min_{t,i}\mathrm{margin}(P^{(t)}_i)\ge \Delta>0.
\)
Applying Lemma~\ref{lem:top1-app} to each pair $(t,i)$ with $N_i^{(t)}$ samples and margin $\Delta$, and union-bounding over all $Tk$ pairs, we conclude that \eqref{eq:het-coverage} together with
\(
n\gtrsim (q_{\min}\Delta^2)^{-1}\log(Tk/\delta)
\)
ensures that all per-step rowwise argmax decisions are correct with probability $\ge 1-\delta/2$.
Finally, by Assumption~\ref{assum:consistent max}, composing these local decisions yields the global argmax of $Q$ for every initial state, completing the proof.
\end{proof}

\section{Experimental Details}

All experiments on math problems use \emph{DeepSeek-Math-7B Base} \citep{deepseekmath2024} with identical decoding settings across conditions and the city-state experiment use \emph{Llama-3-8b-instruct} \citep{dubey2024llama}. We study \emph{inference time} only in the experiments, so no training, fine-tuning, or gradient updates are performed. The model simply consumes the provided context demonstrations and predicts via the count-and-argmax decision rule analyzed in the main text.

\subsection{Synthetic experiments}
We use a two-symbol alphabet $\{A,B\}$ for local rules. Each symbol $r\in\{A,B\}$ denotes a two-point stochastic update on the current discrete state: with probability $p(r)$ apply one update (e.g., $+u(r)$), and with probability $1-p(r)$ apply the other (e.g., $-v(r)$). The states are wrapped to remain in $[k]$. A test instance is a two-step tuple $(x_0,r_1,r_2)$ with $x_0\sim\mu$ (uniform by default) and $r_t\in\{A,B\}$. The key manipulation is how the \emph{semantics} of the letters are tied across steps. In \textbf{same} (aligned), the letter-to-operation mapping is fixed across steps. In \textbf{diff} (misaligned), the step-2 mapping is altered so that the same letter need not mean the same stochastic rule as in step~1. For each configuration we draw $n\in\{8,\dots,60\}$ i.i.d.\ context trajectories. For context samples, \textbf{NonCoT} records only terminals $(x_0,r_1,r_2,x_2)$, while \textbf{CoT} records full paths $(x_0,r_1,x_1,r_2,x_2)$. We report accuracy ($\pm$ SE over seeds) and the proxy $\Delta n(\tau)=n_{\text{CoT}}(\tau)-n_{\text{NonCoT}}(\tau)$. Noise is controlled by moving $p(A)$ and $p(B)$ toward $1/2$ (smaller per-step margin) while holding all other factors fixed.

\subsection{Realistic experiment: modular addition}
Each instance specifies modulus $M$, initial $x_0\in\{0,\dots,M-1\}$, and addends $(a_1,\dots,a_L)$. The target is $x_L\equiv x_0+a_1+\cdots+a_L\ (\mathrm{mod}\ M)$. We compare \textbf{same} ($a_1\equiv\cdots\equiv a_L$) vs.\ \textbf{diff} ($a_1,\dots, a_L$ drawn independently). Contexts contain either only terminals $(x_L)$ (NonCoT) or also $x_l\equiv x_0+a_1+\cdots+a_l$ for all $l$ (CoT), with the same budget $n$ and seeds as in the synthetic suite. Estimation and reporting metrics follow identically from the synthetic protocol, providing a practical check that alignment drives CoT gains beyond the idealized setting.

\subsection{Realistic experiment: city--state rankings}
\label{app:city_state_details}
We build a text corpus of U.S.\ city/state rankings under two criteria: \textbf{population} and \textbf{area}.
Facts are rendered in a fixed template for (i) states (global ranks) and (ii) cities within each state (within-state ranks),
for both criteria. Each query is phrased as \emph{``the $X$-th largest city in the $Y$-th largest state''} with explicit
criteria for the state step and the city-within-state step. Ground truth is obtained by (1) selecting the $Y$-th state
under the chosen state criterion, then (2) selecting the $X$-th city within that state under the chosen city criterion,
and outputting the resulting city name. We define \textbf{same-skill} by using the same criterion for both steps
(pop--pop or area--area) and \textbf{diff-skill} by mixing criteria (pop--area or area--pop). All prompts include the corpus and $n$ in-context demonstrations. CoT demonstrations expose the two intermediate steps
(state identification, then city identification) before the final answer. NonCoT demonstrations output only the final city name.
Evaluation in both cases is on the final answer only.

\end{document}

%% file: Command.tex
\newcommand{\beq}{\vspace{0mm}\begin{equation}}
\newcommand{\eeq}{\vspace{0mm}\end{equation}}
\newcommand{\beqs}{\vspace{0mm}\begin{eqnarray}}
\newcommand{\eeqs}{\vspace{0mm}\end{eqnarray}}
\newcommand{\barr}{\begin{array}}
\newcommand{\earr}{\end{array}}

\newtheorem{thm}{Theorem} [section]

\newtheorem{lem}[thm]{Lemma}

\newtheorem{assum}{Assumption} [section]

\newtheorem{remark}[thm]{Remark}

\usepackage{booktabs}
\usepackage{array}
\newcolumntype{L}[1]{>{\raggedright\let\newline\\\arraybackslash\hspace{0pt}}m{#1}}
\newcolumntype{C}[1]{>{\centering\let\newline\\\arraybackslash\hspace{0pt}}m{#1}}
\newcolumntype{R}[1]{>{\raggedleft\let\newline\\\arraybackslash\hspace{0pt}}m{#1}}
